\theoremstyle{plain}
\newtheorem{theorem}{Theorem}
\newtheorem{corollary}{Corollary}
\theoremstyle{definition}
\theoremstyle{remark}
\newtheorem{remark}{Remark}
\def\eqref#1{equation~\ref{#1}}
\def\1{\bm{1}}
\DeclareMathAlphabet{\mathsfit}{\encodingdefault}{\sfdefault}{m}{sl}
\SetMathAlphabet{\mathsfit}{bold}{\encodingdefault}{\sfdefault}{bx}{n}
\title{Preserve and Personalize:\\Personalized Text-to-Image Diffusion Models\\without Distributional Drift}
\author{Gihoon Kim \quad Hyungjin Park \quad Taesup Kim\footnotemark[2] \\
Graduate School of Data Science \\
Seoul National University \\
}
\begin{document}

\maketitle
\begingroup
\renewcommand{\thefootnote}{\fnsymbol{footnote}}
\footnotetext[2]{Corresponding author}
\endgroup
\begin{abstract}

Personalizing text-to-image diffusion models involves integrating novel visual concepts from a small set of reference images while retaining the model’s original generative capabilities. However, this process often leads to overfitting, where the model ignores the user’s prompt and merely replicates the reference images. We attribute this issue to a fundamental misalignment between the true goals of personalization, which are subject fidelity and text alignment, and the training objectives of existing methods that fail to enforce both objectives simultaneously. Specifically, prior approaches often overlook the need to explicitly preserve the pretrained model’s output distribution, resulting in distributional drift that undermines diversity and coherence. To resolve these challenges, we introduce a Lipschitz-based regularization objective that constrains parameter updates during personalization, ensuring bounded deviation from the original distribution. This promotes consistency with the pretrained model’s behavior while enabling accurate adaptation to new concepts. Furthermore, our method offers a computationally efficient alternative to commonly used, resource-intensive sampling techniques. Through extensive experiments across diverse diffusion model architectures, we demonstrate that our approach achieves superior performance in both quantitative metrics and qualitative evaluations, consistently excelling in visual fidelity and prompt adherence. We further support these findings with comprehensive analyses, including ablation studies and visualizations.


\end{abstract}

\section{Introduction}

Recent advances in diffusion-based image generation models have led to remarkable progress in photorealistic and diverse image synthesis~\citep{dhariwal2021diffusion, ho2020denoising, song2020denoising, nichol2021improved, ho2022classifier, song2020score, karras2022elucidating}. 
In particular, text-to-image diffusion models~\citep{rombach2022high, saharia2022photorealistic, ramesh2022hierarchical, podell2023sdxl, sd3} have demonstrated the ability to synthesize high-quality images across a wide range of prompts, effectively capturing the semantics of general descriptions such as “\texttt{a dog in the snow}”.

However, these models struggle when prompts involve user-specific or hard-to-describe subject details. In real-world scenarios, users often wish to personalize generation with only a few reference images of their own subjects, which makes the task fundamentally challenging. This motivates the need for personalization methods that can adapt pretrained models to novel, user-specific concepts using only a small number of images~\citep{gal2022image, ruiz2023dreambooth}. The ultimate goal of personalization is to faithfully encode new concepts (e.g., “\texttt{my dog}”) while preserving the model’s inherent ability to follow prompts (e.g., “\texttt{in the snow}”), thereby generating diverse and compositionally rich images such as “\texttt{my dog in the snow}”. Unfortunately, learning new subjects from limited data often leads to overfitting where the model disregards prompt content and collapses to simply reproducing the reference images.

To mitigate this issue, recent methods have attempted to constrain the adaptation process by freezing parts of the network \citep{hu2022lora, kumari2023multi, tewel2023key} or limiting the magnitude or direction of parameter updates \citep{qiu2023controlling, han2023svdiff}. While partially effective, these methods rely on heuristic architectural constraints rather than explicitly considering distributional preservation. As a result, they offer only indirect control over the preservation of the pretrained distribution. Moreover, such techniques often introduce a trade-off between flexibility and stability: freezing too many parameters can prevent proper adaptation to new subjects, while relaxing constraints increases the risk of overfitting and prompt misalignment. These limitations make it difficult to precisely balance the dual objectives of personalization.

In contrast, our approach tackles this challenge at the objective level, providing an explicit mechanism to regularize deviation from the pretrained model without relying on architectural heuristics. By formulating the problem through Lipschitz-based regularization, we offer a principled and continuous control over model updates, enabling better generalization. Figure~\ref{fig:toy} illustrates the trends arising from the differences between existing and proposed objectives, which are further discussed in Section~\ref{toy}. We also show that this formulation can be expressed as a variant of L2 regularization, making it simple to implement yet theoretically grounded. Moreover, our approach does not require a pre-sampling stage for regularization, substantially reducing training time.

We validate our approach through comprehensive experiments across multiple diffusion backbones and baseline personalization methods. Our approach achieves superior performance in both quantitative metrics and qualitative evaluations, excelling in visual fidelity as well as prompt consistency. It also reduces training time by more than half, offering substantial efficiency gains in practice. Furthermore, we present intuitive analyses through visualizations and ablation studies, providing empirical support for our theoretical claims. Our contributions are summarized as follows:
\begin{itemize}
    \item We identify an objective-level gap in existing personalization approaches and provide a theoretical analysis of its implications.
    \item We propose a Lipschitz-based regularization objective that guarantees distributional preservation during personalization. The proposed formulation is simple yet theoretically grounded, aligning the training objective with the true goals of personalization.
    \item Our method demonstrates improvements in both subject fidelity and text alignment, together with clear gains in training time efficiency. We also provide intuitive insights through visual analyses and ablation studies that support the theoretical and empirical findings.
\end{itemize}

\begin{figure}[t]
\vspace*{0pt}
  \centering
  \begin{subfigure}[b]{0.09\textwidth}
     \label{fig:toy:data}
    \centering
    \begin{subfigure}[b]{\linewidth}
      \centering
      \includegraphics[width=\linewidth]{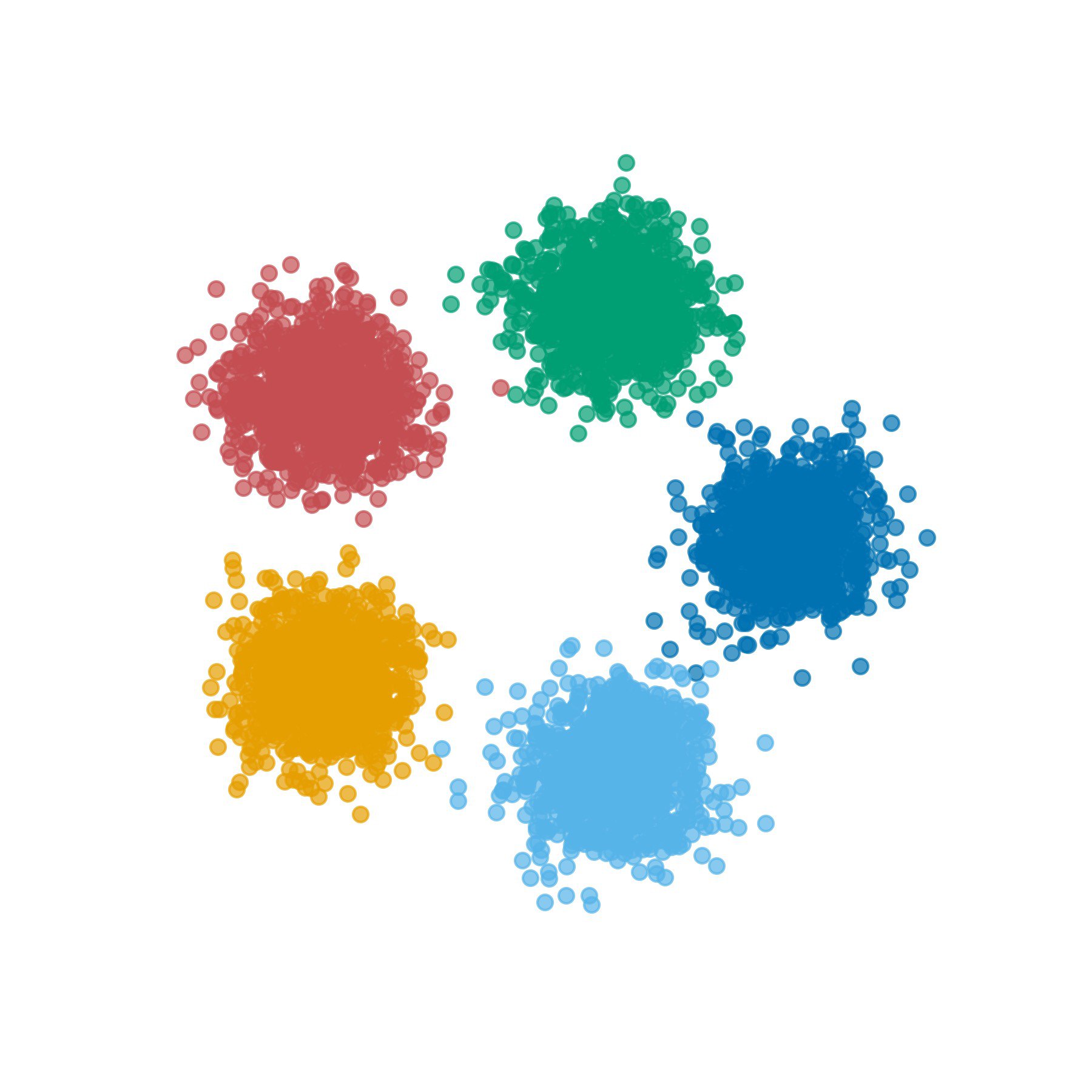}
      \caption*{Pretrained}
      \label{fig:suba1}
    \end{subfigure}
    \vfill
    \begin{subfigure}[b]{\linewidth}
      \centering
      \includegraphics[width=\linewidth]{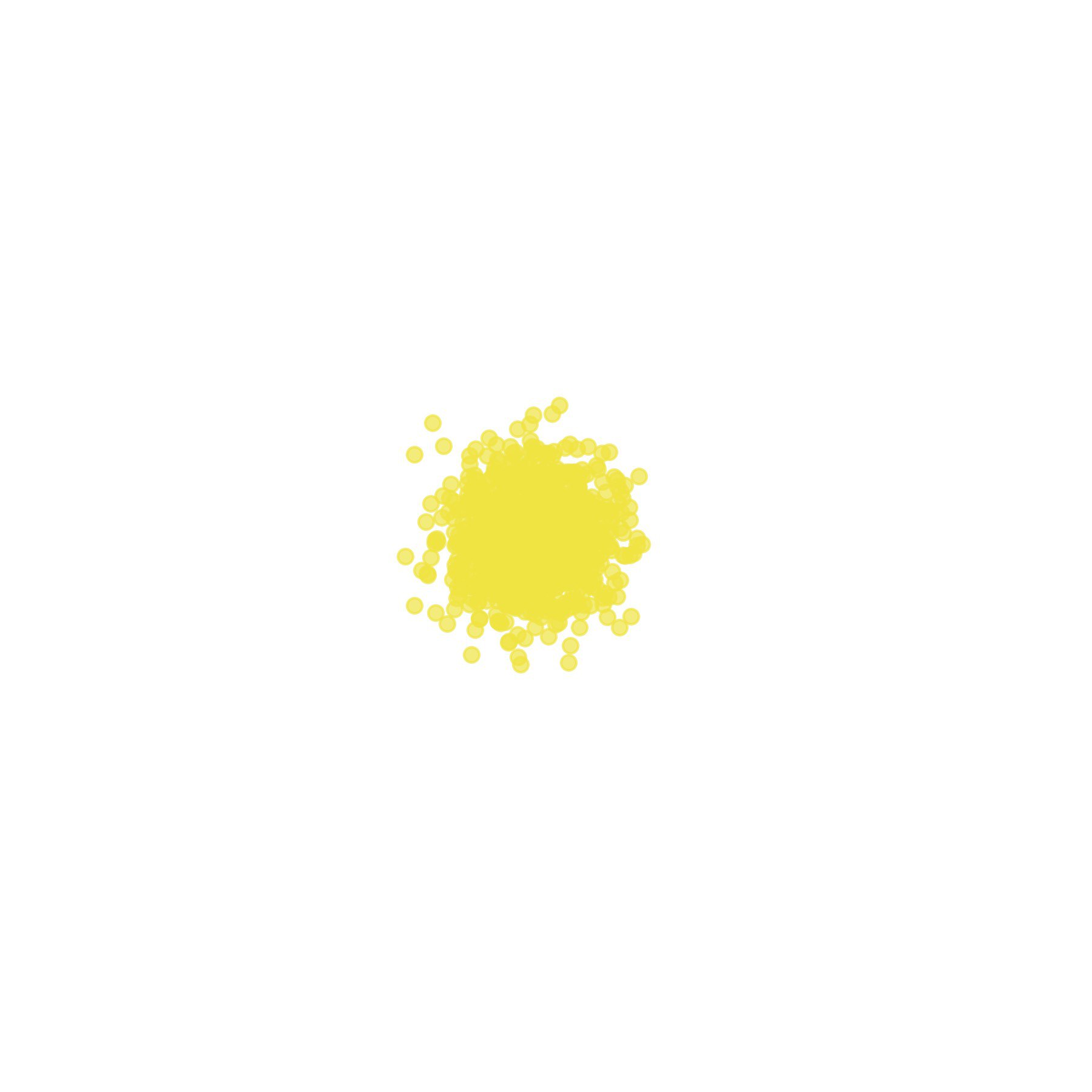}
      \caption*{Target}
      \label{fig:toy:data}
    \end{subfigure}
  \end{subfigure}
  \setcounter{subfigure}{0}
  \begin{subfigure}[b]{0.211\textwidth}
    \centering
    \includegraphics[width=\linewidth]{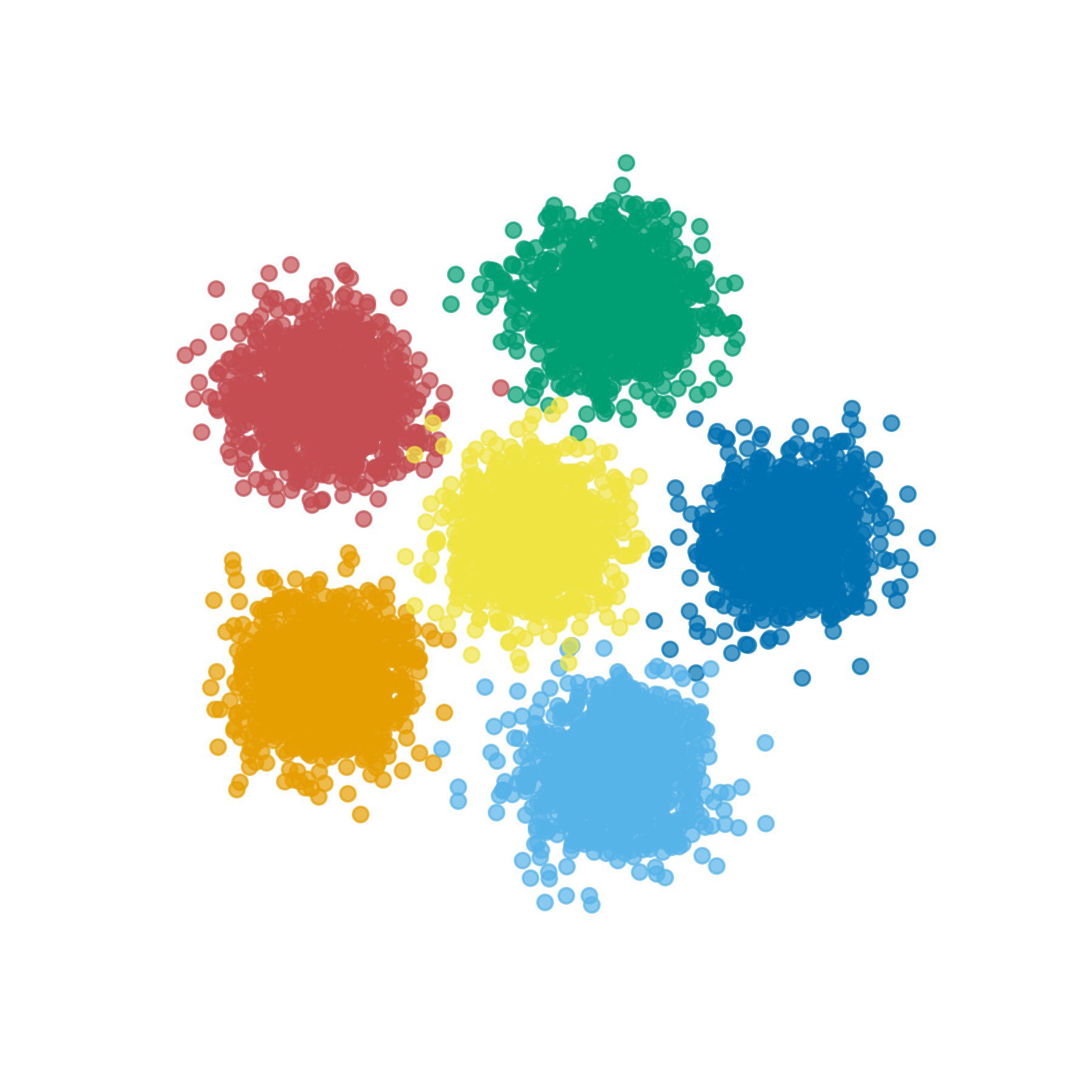}
    \caption{Ideal Case}
    \label{fig:toy:idealcase} 
  \end{subfigure}
  \hspace{0.001\textwidth} 
  \begin{subfigure}[b]{0.211\textwidth}
    \centering
    \includegraphics[width=\linewidth]{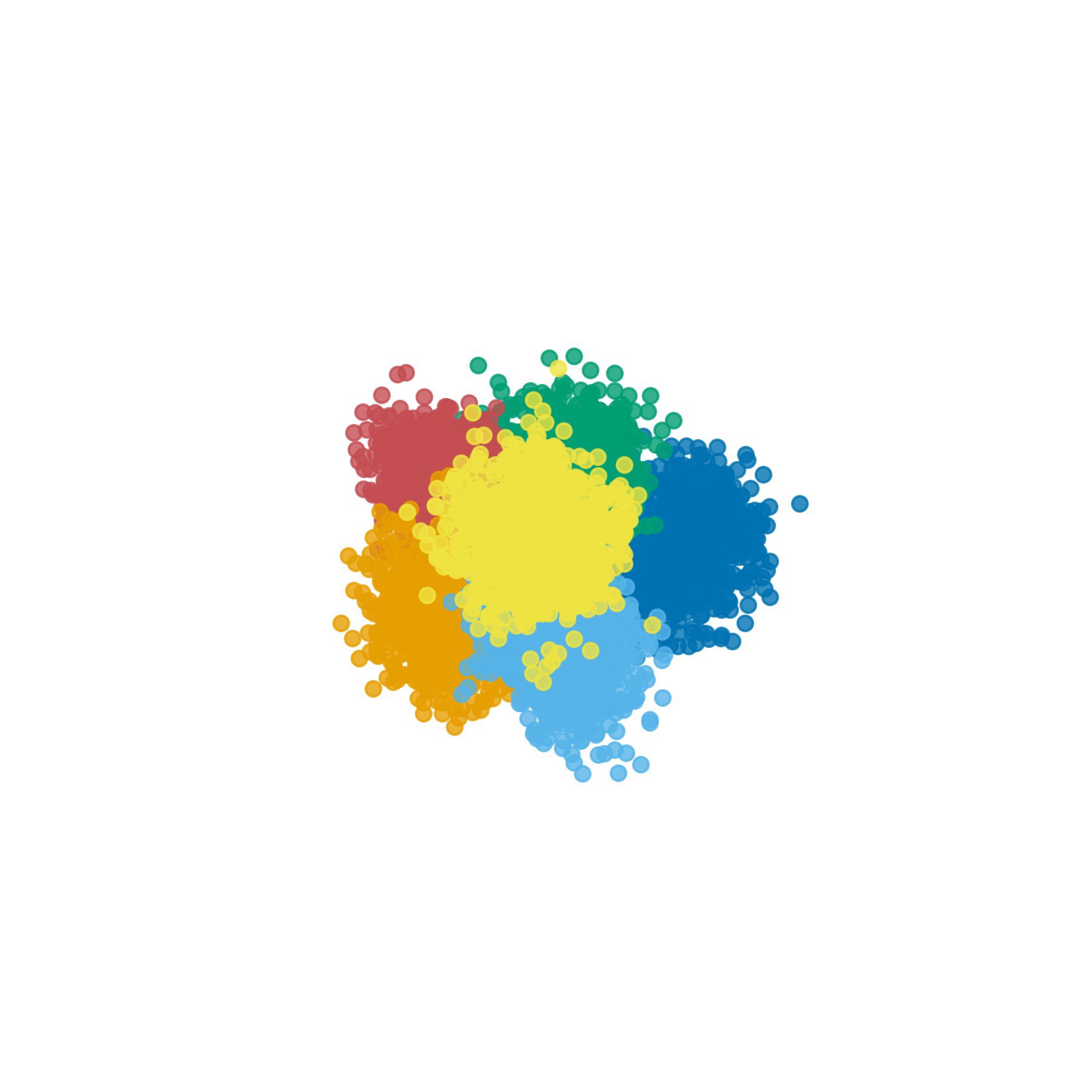}
    \caption{Na\"{\i}ve}
    \label{fig:toy:vanilla} 
  \end{subfigure}
  \hspace{0.001\textwidth} 
  \begin{subfigure}[b]{0.211\textwidth}
    \centering
    \includegraphics[width=\linewidth]{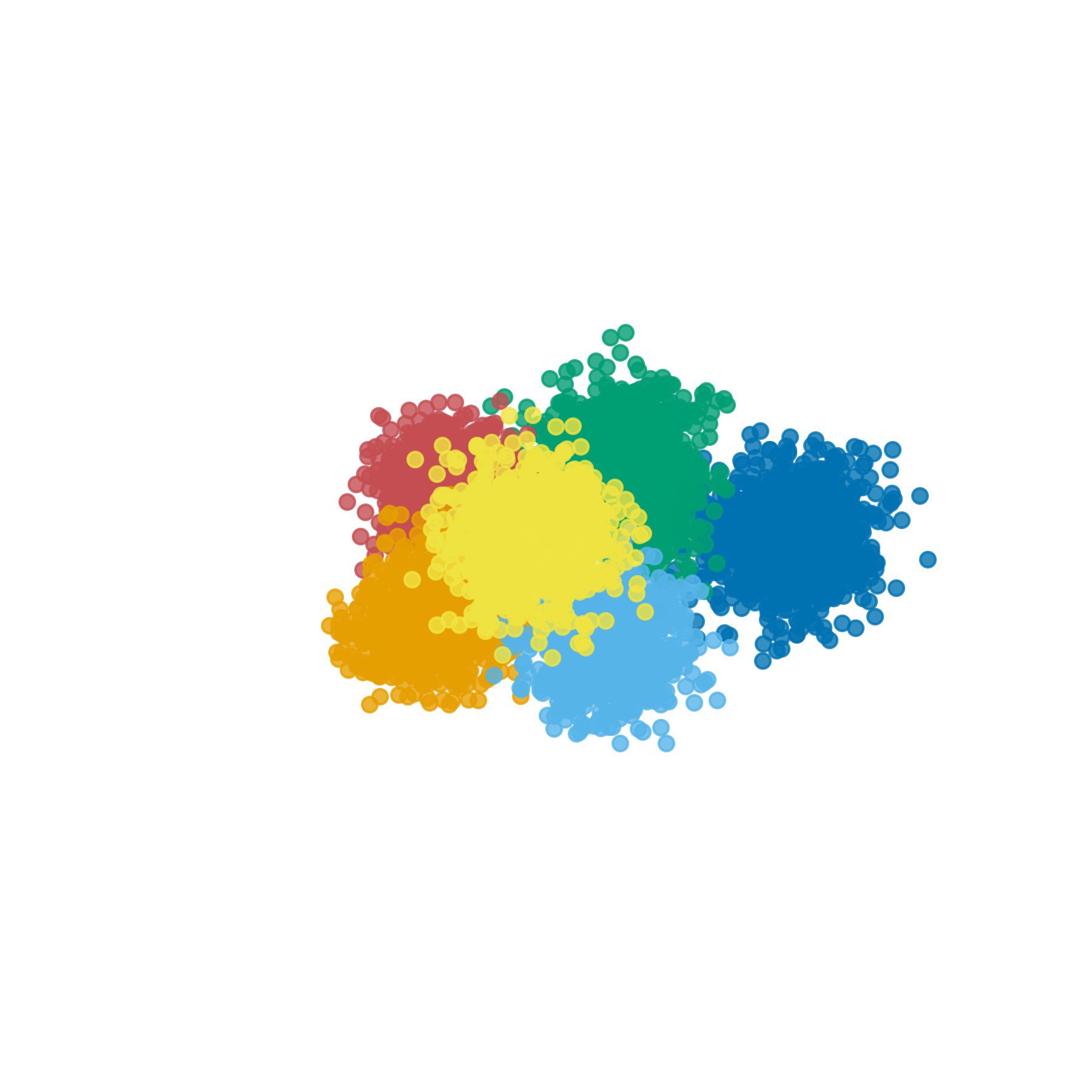}
    \caption{Prior Preserv.}
    \label{fig:toy:db}
  \end{subfigure}
  \hspace{0.001\textwidth} 
  \begin{subfigure}[b]{0.211\textwidth}
    \centering
    \includegraphics[width=\linewidth]{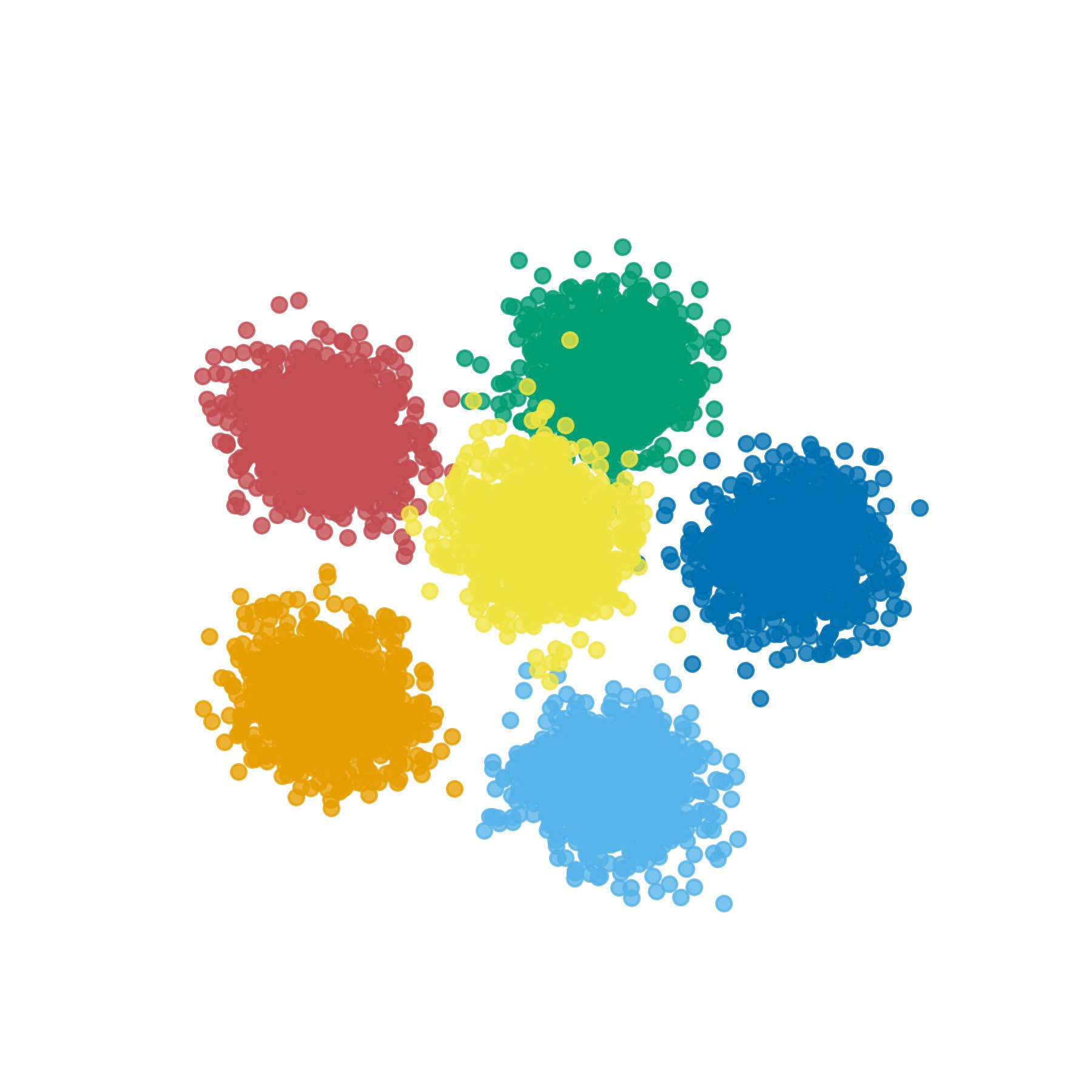}
    \caption{Ours}
    \label{fig:toy:ours}
  \end{subfigure}
  \caption{Visualization of training objective differences on the toy dataset for personalization, which are further discussed in Section~\ref{toy}. The diffusion model is first trained on the pretrained dataset (five colors) and then personalized with the target dataset (yellow). (a) represents the ideal scenario, where the target data is learned while the original knowledge is preserved. (b) and (c), corresponding to Eqs.~\ref{eq:simple} and~\ref{eq:db}, fail to preserve the distributions of all classes. In contrast, (d), which incorporates the proposed regularization, better maintains the structure of the pretrained distribution and best matches the ideal case in (a).
}
    \label{fig:toy}
  \label{fig:all}
  
\end{figure}

\section{Related Work}





\textbf{Personalized Text-to-Image Generation.} A central question of few-shot personalization has been how to adapt pretrained networks to new concepts with only a few subject-specific images. Textual Inversion~\citep{gal2022image, voynov2023p+} encodes subject-specific information into learned text tokens, while DreamBooth~\citep{ruiz2023dreambooth} finetunes the entire network. However, instability in the training process results in mode collapse and overfitting. Subsequent studies aim to improve stability and efficiency by updating subsets of parameters~\citep{meng2022locating, kumari2023multi, tewel2023key} or by applying low-rank adaptation (LoRA)~\citep{hu2022lora, chen2024personalizing}. In this context, SVDiff~\citep{han2023svdiff} focuses updates on the singular values of weight matrices, whereas OFT~\citep{qiu2023controlling} restricts them to angular components. While these enforced constraints help stabilize the training process and preserve generative capacity, the reduced flexibility restricts the model’s expressiveness for personalized concepts.


On the other hand, auxiliary conditioners have been employed without modifying the pretrained model. For instance, BLIP-Diffusion~\citep{li2023blip} leverages pretrained vision–language models~\citep{li2022blip1, li2023blip2}, and IP-Adapter~\citep{chen2023subject, zhang2023adding} introduces an adaptor network trained on large-scale datasets. These approaches increase versatility, but they show limited performance on individual personalized concepts. Moreover, they require substantial computational and external resources, making them impractical in few-shot personalization scenarios. Whether through selecting trainable components or designing auxiliary conditioners, these methods remain tied to specific backbone architectures and therefore fall short of providing a general solution.

\textbf{Objective Functions for Preserving Distribution.} Objective-level formulations for distribution preservation provide a principled direction for personalization. Prior Preservation Loss~\citep{ruiz2023dreambooth} introduces a regularization function via pre-sampling over the generic category of a novel subject, as detailed in Section~\ref{background}. 
Subsequent work extends this idea by constraining novel subjects not only at the generic concept level but also at the image level~\citep{qiao2024facechain} and text level~\citep{huangclassdiffusion}. However, preserving category-level information does not necessarily imply the preservation of all other prior knowledge, a point we discuss in Section~\ref{Sec:motiv}. Beyond personalization, diffusion-based applications~\citep{poole2022dreamfusion, hertz2023delta, wang2023prolificdreamer} introduce objectives to align the output distribution of a learnable generator with that of a pretrained diffusion model. Such methods incorporate pairwise supervision~\citep{yin2024one} or adversarial objectives~\citep{yin2024improved, sauer2024adversarial}. Their dependence on external datasets and extensive optimization, however, restricts their applicability in personalization settings where only a handful of subject images are available. The limited guarantees of prior methods, together with the poor applicability of alternative approaches to few-shot personalization, point to the need for goal-aligned formulations. 

\section{Background} 
\label{background}
\textbf{Text-to-Image Diffusion Models.} Diffusion models are trained to maximize an evidence lower bound (ELBO) on the data log-likelihood \(\log p(\mathbf{x})\) due to the intractability~\citep{luo2022understanding, chan2024tutorial, nakkiran2024step}. The generative process is formulated over a sequence of latent variables~\citep{sohl2015deep}, which can be written as: 
\begin{equation}
\log p(\mathbf{x})= \log \int p(\mathbf{x}_{0:T}) \, d\mathbf{x}_{1:T}
\;\ge\;
\mathbb{E}_{q(\mathbf{x}_{1:T}\mid\mathbf{x_0})}
\Biggl[
\log\frac{p(\mathbf{x}_{1:T})}
         {q(\mathbf{x}_{1:T}\mid\mathbf{x_0})}
\Biggr].
\label{eq:elbo}
\end{equation}
Rather than optimizing Eq.~\ref{eq:elbo} directly, DDPM~\citep{ho2020denoising} adopts a surrogate loss for transition-noise prediction. Text-to-Image diffusion frameworks~\citep{rombach2022high,saharia2022photorealistic, podell2023sdxl} train the denoiser conditioned on a text prompt \(c\), and the resulting conditional surrogate loss maximizes the joint distribution \(p_{\theta}(\mathbf{x},c)\)\footnote{This follows from the conditional distribution
$
\log p_{\theta}(\mathbf{x}\mid c)
= \log p_{\theta}(\mathbf{x},c) - \log p(c)
= \log \int p_{\theta}(\mathbf{x},\mathbf{z}_{1:T},c)\,\mathrm{d}\mathbf{z}_{1:T}
- \log p(c),
$
where \(\log p(c)\) is constant with respect to \(\theta\).}.  In detail, an encoder \(\mathcal{E}\) maps an image \(\mathbf{x}\) to a latent \(\mathbf{z}=\mathcal{E}(\mathbf{x})\), and noisy latents are generated as 
\(\mathbf{z}_t = \sqrt{\bar\alpha_t}\,\mathbf{z} + \sqrt{1-\bar\alpha_t}\,\boldsymbol{\epsilon}\), where \(\boldsymbol{\epsilon}\sim\mathcal{N}(\mathbf{0},\mathbf{I})\), \(t\sim\mathrm{Uniform}\{1,\dots,T\}\), and \(\bar\alpha_t\) denotes the cumulative product of the noise schedule.
Training then minimizes the conditional noise‐prediction loss:
\begin{equation}
\mathcal{L}_{\text{denoise}}
=\;
\mathbb{E}_{\mathbf{z},\,\mathbf{c},\;\boldsymbol{\epsilon},\;t}
\bigl\|\boldsymbol{\epsilon}
       - \boldsymbol{\epsilon}_{\theta}(\mathbf{z}_t,\,\mathbf{c},\,t)\bigr\|_2^2.
\label{eq:simple}
\end{equation}

\textbf{Prior Preservation Loss.} The combination of the personalization loss and the prior preservation loss has become a conventional objective for personalization. For the personalization loss $\mathcal{L}_{\text{personalize}}$, 
the target prompt $c_{\text{target}}$ (e.g., ``\texttt{A photo of V$^*$ dog}'') 
incorporates the special token \texttt{V$^*$}, which guides the model to learn subject-specific features. For the prior preservation loss $\mathcal{L}_{\text{prior}}$, 
the class prompt $c_{\text{class}}$ (e.g., ``\texttt{A photo of a dog}'') 
is used, and 100$\sim$200 class-prompt latents $z'_t$ are sampled from the pretrained model. As both objectives follow the conditional diffusion objective in Eq.~\ref{eq:simple}, the combined formulation is:
\begin{equation}
\mathcal{L}_{\text{total}}
=
\mathbb{E}_{\mathbf{z},\,\mathbf{z}',\,\mathbf{c}_{\text{target}},\,\mathbf{c}_{\text{class}},\,\boldsymbol{\epsilon},\,t}
\Bigl[
\|\boldsymbol{\epsilon} - \boldsymbol{\epsilon}_{\theta}(\mathbf{z}_t,\,\mathbf{c}_{\text{target}},\,t)\|_2^2
\;+\;\lambda_{\text{prior}}\;\|\boldsymbol{\epsilon} - \boldsymbol{\epsilon}_{\theta}(\mathbf{z}'_t,\,\mathbf{c}_{\text{class}},\,t)\|_2^2
\Bigr].
\label{eq:db}
\end{equation}

\section{Method} 
In this section, we investigate why the standard objective fails to enable effective learning in personalization. Next, we explore how to provide a distributional bound tailored to preserving the generative capacity of the pretrained distribution. Let \(x \in \mathcal{X}\) denote an image and \(c \in \mathcal{C}\) its associated text prompt. The pretraining dataset $D_{\text{base}}=\{(x_i,c_i)\}_{i=1}^N$ is sampled i.i.d. from a distribution \(p_{\text{base}}(x,c)\) that closely approximates the true data distribution \(p^*(x,c)\). 
The personalization dataset $D_{\text{per}}=\{(x_j,c_j)\}_{j=1}^M,$ with $M\ll N,$ has underlying distribution \(p_{\text{per}}(x,c)\). Such assumptions are standard and align with practical settings, ranging from large-scale pretraining on LAION-400M~\citep{schuhmann2021laion, schuhmann2022laion} to personalization datasets with only 4$\sim$6 images per subject~\citep{ruiz2023dreambooth}.

\subsection{Motivation: Objective–Goal Misalignment}
\label{Sec:motiv}


The goal in personalization is not merely to fit the provided data distribution, but to preserve the generalization capabilities of the pretrained model while incorporating novel subject information. We now demonstrate a fundamental misalignment that standard personalization objectives lead to divergence from the original distribution.

\begin{theorem}
\label{thm:1}
Let \(p^{*}(x,c)\) denote the reference distribution, and let the model parameters \(\theta_{\text{base}}\) have distribution \(p_{\theta_{\text{base}}}(x,c)\) satisfying, for any $\varepsilon>0$,
\[
\bigl|p^{*}(x,c) - p_{\theta_{\text{base}}}(x,c)\bigr| \;<\; \varepsilon.
\]

Suppose there exists a measurable set
\(D\subset\mathcal X\times\mathcal C\) such that
\[
p_{\text{adapt}}(D)=\gamma,
\qquad
p^{*}(D)=\delta,
\qquad
\gamma\gg\delta>0 .
\]

The model is adapted by gradient descent on the denoising loss
\(\mathcal L_{\text{denoise}}\) from Eq.~\ref{eq:simple}, trained on
\((x,c)\sim p_{\text{adapt}}\):
\[
\theta_{t+1}
  = \theta_t
  - \eta\nabla_\theta \mathcal L_{\text{denoise}}(\theta_t),
\quad\text{where }\theta_{0}=\theta_{\text{base}}.
\]
After sufficient iterations \(t\),
$
p_{\theta_{t}}\longrightarrow p_{\text{adapt}},
$
under universal approximation and convergence assumptions.

Then
\[
D_{KL}\!\bigl(p^{*} \,\|\, p_{\theta_\text{base}}\bigr)
\;<\;
D_{KL}\!\bigl(p^{*} \,\|\, p_{\theta_t}\bigr).
\]
\end{theorem}
\begin{proof}
    See Appendix~\ref{appendix:thm1}.
\end{proof}

\begin{remark}
\label{remark:1}
Personalization based on the standard diffusion objective (Eq.~\ref{eq:simple}) provides no guarantee of preserving the pretrained distribution and may fail to preserve the generative capacity, which can potentially lead to overfitting or mode collapse. 
\end{remark}

\begin{corollary}
\label{cor:1}

Let \( p_{\text{class}} \) be the sub-distribution of \( p(x, c_{\text{class}}) \) used for prior preservation as described in Section~\ref{background}. \( p'_{\text{per}} \), a mixture of \( p_{\text{per}} \) and \( p_{\text{class}} \), still satisfies $M\ll N.$ Then,
\[
p'_{\text{per}}(D) \ge \gamma', \quad p^*(D) \le \delta \quad (\gamma' \gg \delta).
\]
By Theorem~\ref{thm:1},
\[
D_{KL}\!\bigl(p^{*} \,\|\, p_{\theta_\text{base}}\bigr)
\;<\;
D_{KL}\bigl(p^* \,\|\, p'_{\theta_t}\bigr).
\]

\end{corollary}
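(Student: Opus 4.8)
The plan is to reduce Corollary~\ref{cor:1} to a direct invocation of Theorem~\ref{thm:1}, taking the adaptation distribution to be the mixture $p'_{\mathrm{per}}$. First I would observe that the DreamBooth objective $\mathcal{L}_{\mathrm{DB}}$ of Eq.~\eqref{eq:db} is nothing but the conditional denoising loss $\mathcal{L}_{\mathrm{Denoise}}$ of Eq.~\eqref{eq:simple} evaluated on samples drawn from $p'_{\mathrm{per}}=\tfrac{1}{1+\lambda_{\mathrm{prior}}}\,p_{\mathrm{per}}+\tfrac{\lambda_{\mathrm{prior}}}{1+\lambda_{\mathrm{prior}}}\,p_{\mathrm{class}}$, where the first component is paired with the prompt $c_{\mathrm{target}}$ and the second with $c_{\mathrm{class}}$: the two squared-error terms, weighted by $1$ and $\lambda_{\mathrm{prior}}$, are exactly the per-component contributions to the expectation over this mixture. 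Hence running gradient descent on $\mathcal{L}_{\mathrm{DB}}$ from $\theta_0=\theta_{\mathrm{base}}$ is precisely the adaptation procedure in the hypothesis of Theorem~\ref{thm:1} with $p_{\mathrm{adapt}}=p'_{\mathrm{per}}$, so under the same universal-approximation and convergence assumptions $p_{\theta_t}\to p'_{\mathrm{per}}$.

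Second, I would verify the mass condition for the mixture. Let $D$ be the measurable set witnessing the drift in the vanilla case, i.e.\ a small neighborhood of the $M$ personalization examples on which $p_{\mathrm{per}}(D)=\gamma$ while $p^{*}(D)=\delta$ with $\gamma\gg\delta$. Since $p'_{\mathrm{per}}(D)\ge \tfrac{1}{1+\lambda_{\mathrm{prior}}}\,p_{\mathrm{per}}(D)=\tfrac{\gamma}{1+\lambda_{\mathrm{prior}}}=:\gamma'$ and the mixing weight $\tfrac{1}{1+\lambda_{\mathrm{prior}}}$ is a fixed positive constant, the separation $\gamma'\gg\delta$ is preserved, while $p^{*}(D)\le\delta$ is unchanged because neither $D$ nor $p^{*}$ has changed. (Alternatively, if one wants a set on which $p'_{\mathrm{per}}$ concentrates nearly all its mass, take $D$ to be the union of small neighborhoods of the $M$ personalization images and of the $100$--$200$ sampled class latents; because the total count remains $\ll N$, the approximately continuous, high-entropy reference $p^{*}$ still assigns it mass $\le\delta$, whereas $p'_{\mathrm{per}}(D)\to 1$.) Applying Theorem~\ref{thm:1} with the triple $(D,\gamma',\delta)$ and $p_{\mathrm{adapt}}=p'_{\mathrm{per}}$ then gives $D_{KL}(p^{*}\,\|\,p_{\theta_{\mathrm{base}}})<D_{KL}(p^{*}\,\|\,p'_{\theta_t})$, which is the claim.

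I expect the main obstacle to be the second step: making the choice of $D$ rigorous while simultaneously keeping $\gamma'\gg\delta$ and $p^{*}(D)\le\delta$. The tension is that the prior-preservation component $p_{\mathrm{class}}$ lies inside the support of $p^{*}$ (it is generated by the pretrained model on a generic class prompt), so one cannot simply claim that mass injected by $p_{\mathrm{class}}$ is invisible to $p^{*}$. The argument must therefore either restrict $D$ to the genuinely rare personalization region, accepting the constant-factor shrinkage of $\gamma$ to $\gamma'=\gamma/(1+\lambda_{\mathrm{prior}})$, or quantify that the finite set of sampled class latents still has $p^{*}$-measure at most $\delta$ because $p^{*}$ disperses its mass over a support of size $\Theta(N)$ with $M\ll N$. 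A mild caveat is also needed in the limiting regime $\lambda_{\mathrm{prior}}\to\infty$, where $\gamma'\to 0$; but for any fixed, practical $\lambda_{\mathrm{prior}}$ the bound goes through, and this matches the intuition that adding a bounded amount of finitely-sampled class data cannot by itself pull the adapted distribution back onto $p^{*}$.
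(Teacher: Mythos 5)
Your proposal takes essentially the same route as the paper: the paper offers no separate proof of Corollary~\ref{cor:1} beyond asserting that the mixture $p'_{\mathrm{per}}$ still concentrates mass $\gamma'\gg\delta$ on a set $D$ that $p^{*}$ assigns at most $\delta$, and then invoking Theorem~\ref{thm:1} with $p_{\mathrm{adapt}}=p'_{\mathrm{per}}$ --- exactly your reduction. You actually supply more detail than the paper does (the explicit mixture weights $\tfrac{1}{1+\lambda_{\mathrm{prior}}}$, the lower bound $\gamma'=\gamma/(1+\lambda_{\mathrm{prior}})$, and the caveat about $p_{\mathrm{class}}$ lying inside the support of $p^{*}$), all of which is consistent with, and a fair tightening of, the paper's informal argument.
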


\begin{remark}
\label{remark:2}
Even when trained with prior-preservation samples based on the class prompt, as in Eq.~\ref{eq:db}, the model still offers no guarantee of preserving the pretrained distribution.

\end{remark}

A fundamental limitation of existing objectives is revealed by Remarks~\ref{remark:1} and~\ref{remark:2}, highlighting the need for new objectives that can achieve both subject adaptation and distributional preservation. It also explains why the unintended distribution shift poses inherent challenges in personalization. This will be examined as illustrated in Figures~\ref{fig:toy:vanilla} and~\ref{fig:toy:db}.


\begin{figure}[t]
\centering
\begin{minipage}[t]{0.61\linewidth}
\vspace*{-20pt}  

\begin{algorithm}[H]
\begin{scriptsize}
\caption{Lipschitz-based Regularization for Personalization}
\label{algo}
\begin{algorithmic}[1]
\Require Pretrained $\epsilon_{\theta_{\text{base}}}$, dataset $\mathcal{D}_{\text{per}}$, prompt $c_{\text{per}}$, weight $\lambda$
\Ensure Personalized $\epsilon_{\theta_{\text{per}}}$

\Statex \textcolor{gray}{\sout{\textbf{Prior-preservation pre-sampling:}}}
\State \textcolor{gray}{\sout{for $n = 1, \dots, N_{\text{cls}}$ do}}
\State \hspace*{\algorithmicindent}\textcolor{gray}{\sout{$z^{(n)}_{\text{cls}} \sim p_{\epsilon_{\theta_{\text{base}}}}(z\,|\,c_{\text{cls}}); 
              \mathcal D_{\text{cls}} \gets \mathcal D_{\text{cls}} \cup \{z^{(n)}_{\text{cls}}\}$}}
\State \textcolor{gray}{\sout{end for}}

\State $\epsilon_{\theta_{\text{per}}} \gets \text{copyWeights}(\epsilon_{\theta_{\text{base}}})$; Freeze $\epsilon_{\theta_{\text{base}}}$

\Statex \textbf{Personalization Phase:}

\For{step}
  \State $x\!\sim\!\mathcal D_{\text{per}}$; $z\!\gets\!\mathcal E(x)$; $c\!\gets\!\text{TextEncoder}(c_{\text{per}})$
  \State $t\!\sim\!\mathrm{Unif}(\{1,\dots,T\}),\ \epsilon\!\sim\!\mathcal N(\mathbf 0,\mathbf I)$
  \State $\tilde z_t\!:=\!\sqrt{\bar\alpha_t}\,z+\sqrt{1-\bar\alpha_t}\,\epsilon,\ \ \text{where }\alpha_t:=1-\beta_t,\ \bar\alpha_t:=\prod_{s=1}^{t}\alpha_s$
  \State $\mathcal{L} \gets
  \|\epsilon - \epsilon_{\theta_{\text{per}}}(\tilde z_t, t, c)\|_2^2
  + $\colorbox{blue!15}{$\lambda\cdot\sum_i \|\theta^i_{\text{per}} - \theta^i_{\text{base}}\|^2_2$}
  \State Update $\theta_{\text{per}}$ w.r.t. $\nabla \mathcal{L}$
\EndFor
\end{algorithmic}
\end{scriptsize}
\end{algorithm}
\end{minipage}
\hfill
\begin{minipage}[t]{0.35\linewidth}
\centering
\vspace*{-7pt}  
\includegraphics[width=\linewidth]{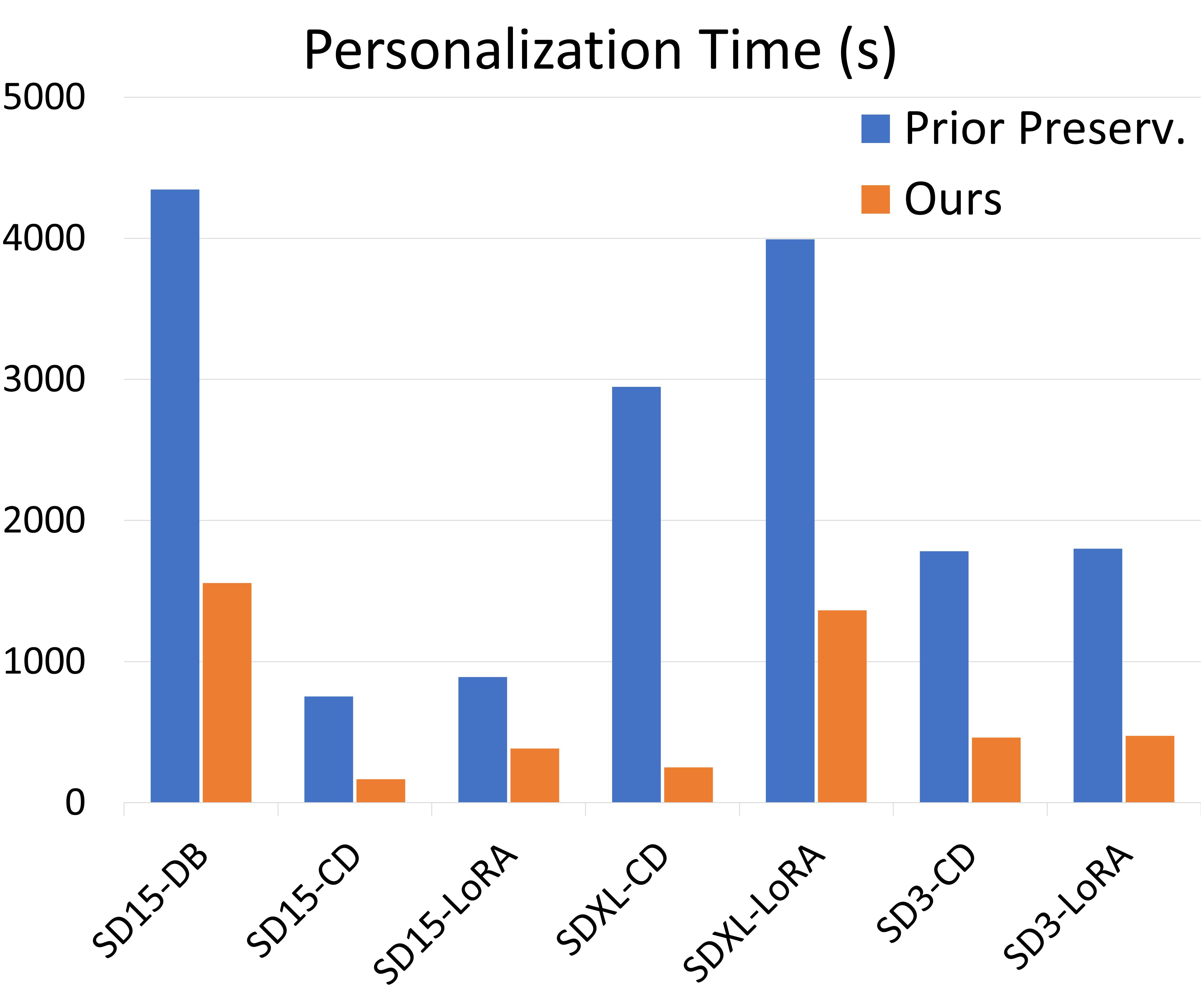}
\vspace{-2.0em}
\captionof{figure}{Training efficiency of the proposed method versus the baseline.}

\label{fig:training_time}
\end{minipage}
\end{figure}

\subsection{Lipschitz-based Regularization for Distribution Preservation}
\label{sec:method}



Motivated by Section~\ref{Sec:motiv}, we investigate a Lipschitz-based regularization that provides an explicit bound on distributional preservation. The following Theorem~\ref{thm:2} provides the justification by showing that the generative distribution of the model is bounded under the Lipschitz continuity of the noise prediction network. The detailed proof is deferred to Appendix~\ref{appendix:thm2}.

\begin{theorem}
\label{thm:2}
If the diffusion model $\varepsilon_\theta$ is Lipschitz continuous in $\theta$, for any two parameter sets $\theta_1$ and $\theta_2$,  
there exists a constant $\lambda > 0$ such that \begin{equation}\label{eq:kl_lip}
D_{KL}\bigl(p_{\theta_1} \,\|\, p_{\theta_2}\bigr)
\;\le\; \lambda \cdot \|\theta_1 - \theta_2\|_k.
\end{equation}
\end{theorem}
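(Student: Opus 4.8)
The plan is to control $D_{KL}(p_{\theta_1}\,\|\,p_{\theta_2})$ between the two \emph{generative} marginals by the KL divergence between the two \emph{reverse diffusion processes} that share the same forward (noising) chain, and then to collapse that quantity onto the discrepancy between the noise-prediction networks $\varepsilon_{\theta_1}$ and $\varepsilon_{\theta_2}$, which the Lipschitz-in-$\theta$ hypothesis bounds directly by $\|\theta_1-\theta_2\|_k$. Since personalization keeps the prompt distribution $p(c)$ fixed, it suffices to establish the bound for each conditional $p_{\theta_i}(x\mid c)$ and then take the expectation over $c\sim p(c)$.

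First I would apply the data-processing inequality for KL divergence: the image marginal $x_0$ is obtained from the full trajectory $x_{0:T}$ by marginalization, so $D_{KL}(p_{\theta_1}(x_0\mid c)\,\|\,p_{\theta_2}(x_0\mid c))\le D_{KL}(p_{\theta_1}(x_{0:T}\mid c)\,\|\,p_{\theta_2}(x_{0:T}\mid c))$, and it is enough to bound the joint. Because both models share the terminal prior $p(x_T)=\mathcal N(0,I)$, the chain rule for KL gives
\[
D_{KL}\bigl(p_{\theta_1}(x_{0:T}\mid c)\,\|\,p_{\theta_2}(x_{0:T}\mid c)\bigr)
=\sum_{t=1}^{T}\mathbb{E}_{x_t\sim p_{\theta_1}}\Bigl[D_{KL}\bigl(p_{\theta_1}(x_{t-1}\mid x_t,c)\,\|\,p_{\theta_2}(x_{t-1}\mid x_t,c)\bigr)\Bigr],
\]
with the $t=T$ term vanishing. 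Each reverse transition is Gaussian with a covariance $\sigma_t^2 I$ fixed by the noise schedule (hence independent of $\theta$) and a mean that is an affine function of $\varepsilon_{\theta_i}(x_t,t,c)$, so the KL between two such Gaussians with common covariance reduces to $\tfrac{1}{2\sigma_t^2}\|\mu_{\theta_1}-\mu_{\theta_2}\|^2 = \tfrac{a_t^2}{2\sigma_t^2}\|\varepsilon_{\theta_1}(x_t,t,c)-\varepsilon_{\theta_2}(x_t,t,c)\|^2$, where $a_t$ is a schedule constant. Invoking the Lipschitz hypothesis uniformly over $(x_t,t,c)$ bounds the inner norm by $L\|\theta_1-\theta_2\|_k$, so each summand is at most $\tfrac{a_t^2 L^2}{2\sigma_t^2}\|\theta_1-\theta_2\|_k^2$; summing over $t$ and taking the outer expectation yields $D_{KL}(p_{\theta_1}\,\|\,p_{\theta_2})\le C\,\|\theta_1-\theta_2\|_k^2$ with $C=L^2\sum_{t=1}^{T} a_t^2/(2\sigma_t^2)$.

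To recover the \emph{linear} bound stated in Eq.~\eqref{eq:kl_lip}, I would restrict to a bounded region of parameter space — concretely, the ball of some radius $R$ around $\theta_{\mathrm{base}}$ containing all iterates of the personalization procedure — on which $\|\theta_1-\theta_2\|_k^2\le 2R\,\|\theta_1-\theta_2\|_k$; setting $\lambda=2RC$ then gives $D_{KL}(p_{\theta_1}\,\|\,p_{\theta_2})\le\lambda\,\|\theta_1-\theta_2\|_k$. I expect this last step to be the main obstacle: the chain-rule estimate is intrinsically quadratic in $\|\theta_1-\theta_2\|$ (which also matches the local Fisher-information behavior of KL), so the advertised linear form genuinely relies on the boundedness of the reachable parameter set, i.e.\ on $\theta_1,\theta_2$ lying in a fixed compact neighborhood. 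A secondary technical point is that the Lipschitz constant of $\varepsilon_\theta$ must be taken uniform in $(x_t,t,c)$, and — in the latent-diffusion setting — the VAE decoder should itself be Lipschitz so the bound transfers from latents to pixel space; both are mild for the smooth architectures used in practice and can be folded into the constant $\lambda$.
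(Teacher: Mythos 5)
Your proposal is correct in substance but takes a genuinely different route from the paper. You bound the KL between the image marginals by the KL between the full reverse-process path measures (data-processing inequality), decompose via the chain rule into per-step Gaussian KLs with shared covariance, and collapse each summand onto $\|\varepsilon_{\theta_1}-\varepsilon_{\theta_2}\|^2$, which the Lipschitz hypothesis controls; this yields an intrinsically \emph{quadratic} bound $C\|\theta_1-\theta_2\|_k^2$ that you then convert to the stated linear form by restricting to a bounded parameter region. The paper instead argues through Tweedie's formula and the probability-flow ODE: it claims the pointwise log-density gap $|\log p_{\theta_1}(x)-\log p_{\theta_2}(x)|$ is itself Lipschitz in $\theta$ (with constant $L''=L\int_0^T\sigma_t^{-1}\,dt$) and integrates this against $p_{\theta_1}$ to get the linear bound directly, with no compactness assumption. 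Your decomposition is the more standard and more careful one: it avoids the delicate point in the paper's argument that the ODE trajectory $x_t$ appearing in the integral representation of $\log p_\theta(x)$ itself depends on $\theta$, so the difference of log-likelihoods is not literally the integral of the score difference along a common path, and it matches the local Fisher-information (quadratic) behavior of KL. The price is that your linear bound is only valid on a compact neighborhood of $\theta_{\mathrm{base}}$ — a restriction you correctly flag, and which the paper's statement (``for any two parameter sets'') silently avoids by asserting the stronger uniform log-density bound. Both arguments share the assumption that the network's Lipschitz constant in $\theta$ is uniform over $(x,t,c)$; your additional remark about the VAE decoder in the latent-diffusion setting is a valid practical caveat the paper does not address.
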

\emph{Proof sketch.}
\begin{itemize}
  \item Given that the composition of Lipschitz functions preserves Lipschitz continuity~\citep{neyshabur2015norm,asadi2018lipschitz} and that attention mechanisms admit finite Lipschitz constants on compact input domains~\citep{castin2024smooth}, 
we assume $\varepsilon_\theta(x,t)$ is Lipschitz in $\theta$ with finite constant $L_\theta$.
  \item By Tweedie’s formula 
    \citep{efron2011tweedie}, the score is
    $
      s_\theta(x,t)
      = -\,{\varepsilon_\theta(x,t)}/{\sigma_t}\,,
    $
    and scalar multiplication by \(1/\sigma_t\) preserves Lipschitz continuity.
  \item The probability‐flow ODE gives
  $
    \log p_\theta(x) = \log p_T(x_T) -
    \int_{0}^{T} s_\theta(x_t,t)\,\frac{dx_t}{dt}\,dt\,
  $~\citep{song2019generative},
  and since integration is a linear operator, it also preserves the Lipschitz bound.
\item Finally, the triangle inequality implies
$
  D_{KL}(p_{\theta_1}\|\;p_{\theta_2})
  \;\le\;
  \lambda\cdot \,\|\theta_1 - \theta_2\|_k,
\text{where }\;\lambda = cL_\theta \\ \;\;\text{for some} \;c > 0.
$
\end{itemize}

\begin{remark}
\label{remark:3}
For $k=2$, squaring the inequality yields a form directly connected to an L2 regularization term, 
$\lambda \cdot \|\theta_{1}-\theta_{2}\|_{2} \le \lambda' \cdot \|\theta_{1}-\theta_{2}\|_{2}^{2}$, 
where $\lambda'$ accounts for the squared scaling.
\end{remark}

Building on Theorem~\ref{thm:2}, we propose a novel objective that regularizes personalization relative to the pretrained distribution based on Eq.~\ref{eq:kl_lip}. Remark~\ref{remark:3} demonstrates that minimizing the L2 regularization term serves as a relaxed form of the Lipschitz constraint. Although the L2 form is conventional, its use as a surrogate for Lipschitz continuity in the context of personalization has not been utilized before. The full training procedure with the proposed objective is summarized in Algorithm~\ref{algo}. As a result, the proposed formulation both learns the new subject distribution and preserves the pretrained distribution, which the conventional objective is unable to ensure. 

The proposed objective also provides practical advantages. 
First, preserving the distribution through parameter distance eliminates the need for additional datasets such as prior samples or external resources. As illustrated in Figure \ref{fig:training_time}, removing this requirement yields a considerable reduction in training time. 
Second, the Lipschitz-based regularizer bounds the KL divergence from the pretrained distribution with a single hyperparameter $\lambda$, thereby quantifying the degree of distributional preservation during training and enabling explicit control as discussed in Section~\ref{abl}.
By providing a simplified yet explicit mechanism for control, our approach contrasts with previous methods that addressed the issue only implicitly through heuristic strategies.
The subsequent experiments validate these advantages and highlight the practical impact of our objective.





\begin{table*}[t]
\centering
\begin{minipage}[b]{0.55\textwidth}   
\centering
\captionof{table}{Performance gains across backbone models.}
\label{tab:compar_backbone}
\small
\setlength{\tabcolsep}{5pt}
\resizebox{1.0\textwidth}{!}{%
\begin{tabular}{c l c c c}

\toprule
\textbf{Model} & \textbf{Method} & \textbf{DINO ${\uparrow}$} & \textbf{CLIP-T ${\uparrow}$} & \textbf{CLIP-I ${\uparrow}$} \\
\midrule
\midrule

\multirow{6}{*}{\rotatebox{90}{\textbf{SD-1.5}}} 
  & \cellcolor{gray!15}DB           & \cellcolor{gray!15}0.6028 & \cellcolor{gray!15}0.2793 & \cellcolor{gray!15}0.7881 \\
  & \quad + Ours                            & \textbf{0.6394 (+0.0366)} & \textbf{0.2976 (+0.0183)} & \textbf{0.7948 (+0.0067)} \\
  & \cellcolor{gray!15}CD     & \cellcolor{gray!15}0.5568 & \cellcolor{gray!15}0.3154 & \cellcolor{gray!15}0.7539 \\
  & \quad + Ours                            & \textbf{0.5638 (+0.0070)} & \textbf{0.3158 (+0.0004)} & \textbf{0.7550 (+0.0011)} \\
  & \cellcolor{gray!15}DB-LoRA      & \cellcolor{gray!15}0.5776 & \cellcolor{gray!15}\textbf{0.3108} & \cellcolor{gray!15}0.7739 \\
  & \quad + Ours                            & \textbf{0.6038 (+0.0262)} & 0.2975 (-0.0133) & \textbf{0.7915 (+0.0176)} \\
\midrule

\multirow{4}{*}{\rotatebox{90}{\textbf{SD-XL}}} 
  & \cellcolor{gray!15}CD     & \cellcolor{gray!15}0.5337 & \cellcolor{gray!15}0.3239 & \cellcolor{gray!15}0.7485 \\
  & \quad + Ours                            & \textbf{0.5493 (+0.0156)} & \textbf{0.3251 (+0.0012)} & \textbf{0.7540 (+0.0055)} \\
  & \cellcolor{gray!15}DB-LoRA      & \cellcolor{gray!15}0.6562 & \cellcolor{gray!15}\textbf{0.3099} & \cellcolor{gray!15}0.7970 \\
  & \quad + Ours                            & \textbf{0.6819 (+0.0257)} & 0.3014 (-0.0085) & \textbf{0.8103 (+0.0133)} \\
\midrule

\multirow{4}{*}{\rotatebox{90}{\textbf{SD-3.0}}} 
  & \cellcolor{gray!15}CD     & \cellcolor{gray!15}0.6168 & \cellcolor{gray!15}\textbf{0.3031} & \cellcolor{gray!15}0.7929 \\
  & \quad + Ours                            & \textbf{0.6271 (+0.0103)} & 0.2955 (-0.0076) & \textbf{0.8023 (+0.0094)} \\
  & \cellcolor{gray!15}DB-LoRA      & \cellcolor{gray!15}0.6043 & \cellcolor{gray!15}0.3098 & \cellcolor{gray!15}0.7823 \\
  & \quad + Ours                            & \textbf{0.6217 (+0.0174)} & \textbf{0.3103 (+0.0005)} & \textbf{0.7949 (+0.0126)} \\
\bottomrule
\end{tabular}%
}
\centering
\end{minipage}
\hfill
\begin{minipage}[h]{0.44\textwidth}   
\centering
\vspace{-1.3cm} 
\includegraphics[width=\textwidth]{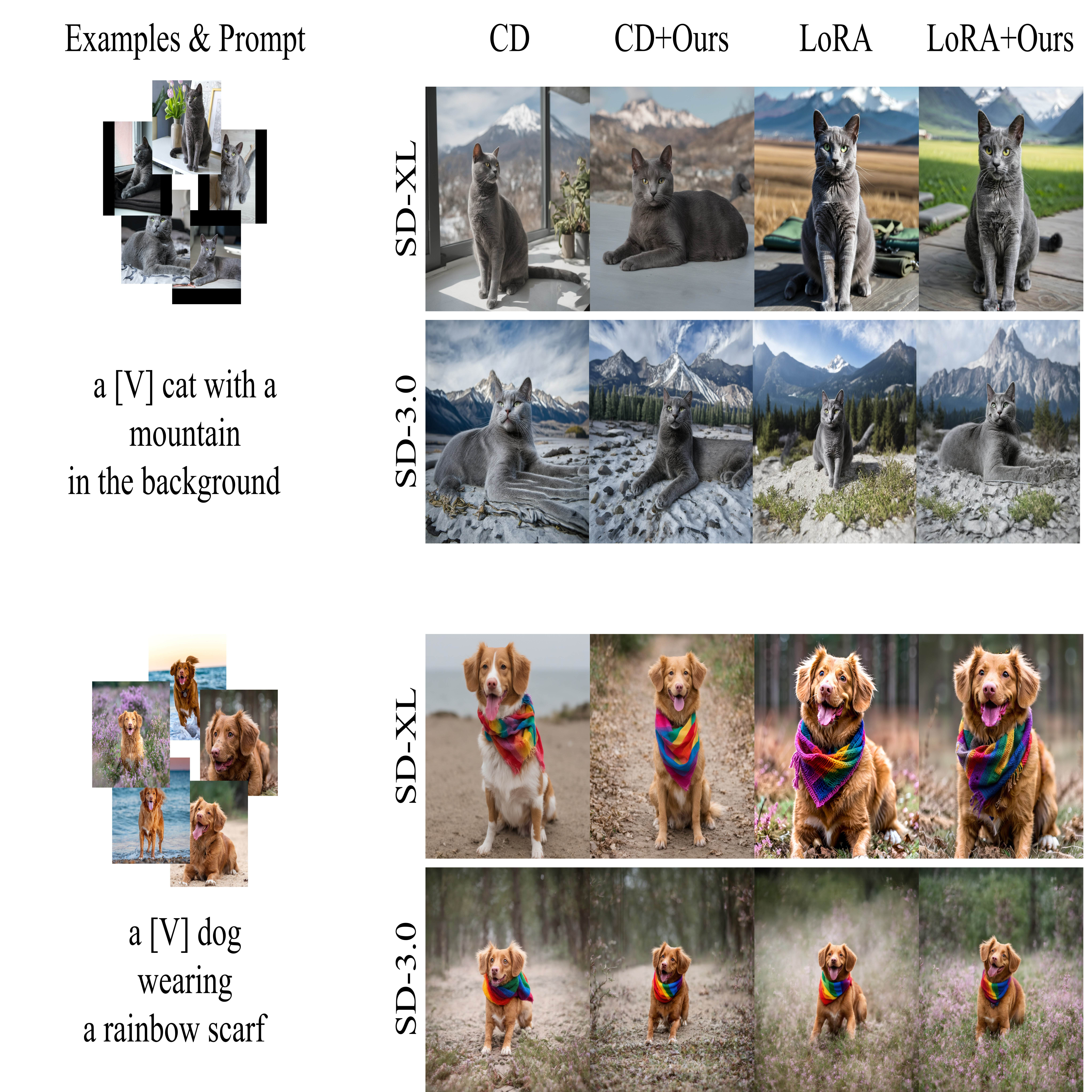} 
\vspace{-0.7cm}
\captionof{figure}{Visual comparison of the proposed method and baselines across backbone models.}
\vspace{-2.0em}
\label{fig:backbone_compar}
\end{minipage}
\end{table*}

\subsection{Toy Experiment: Method Validation}
\label{toy}

In this section, we present a visual analysis using toy data to validate the claims established in Theorems~\ref{thm:1} and~\ref{thm:2}. 
The experiment investigates how the proposed method influences the preservation of the pretrained distribution when the model adapts to a new target, in comparison to existing approaches. We employ diffusion models trained on 2D data sampled from Gaussians with shared variance and different means. As illustrated in Figure~\ref{fig:toy}, the leftmost figures show the pretrained distribution of five classes (top) and an introduced class to be learned (in yellow, bottom). 
The other four figures present the outcomes of personalization under different objectives.




These results demonstrate that the proposed regularization preserves the pretrained distribution more effectively than conventional objectives. 
When trained with Eq.~\ref{eq:simple}, the pretrained distribution is not preserved. Using Eq.~\ref{eq:db} partially preserves the selected class (shown in deep blue), but the rest of the pretrained classes are not maintained. In contrast, applying our method as in Algorithm~\ref{algo} enables the model to better preserve the original distribution while still adapting to the new data. This empirical result suggests that, unlike conventional methods which primarily preserve the subject class (e.g., ``\texttt{dog}'' in ``\texttt{my dog in the snow}''), the proposed regularization retains broader contextual generality (e.g., ``\texttt{the snow}'') during personalization. It further demonstrates that the method can maintain the pretrained distribution without requiring explicit access to the original training data, making it well-suited for personalization scenarios with limited data and complex distributions.

\begin{figure}
    \centering
    \includegraphics[width=\linewidth]{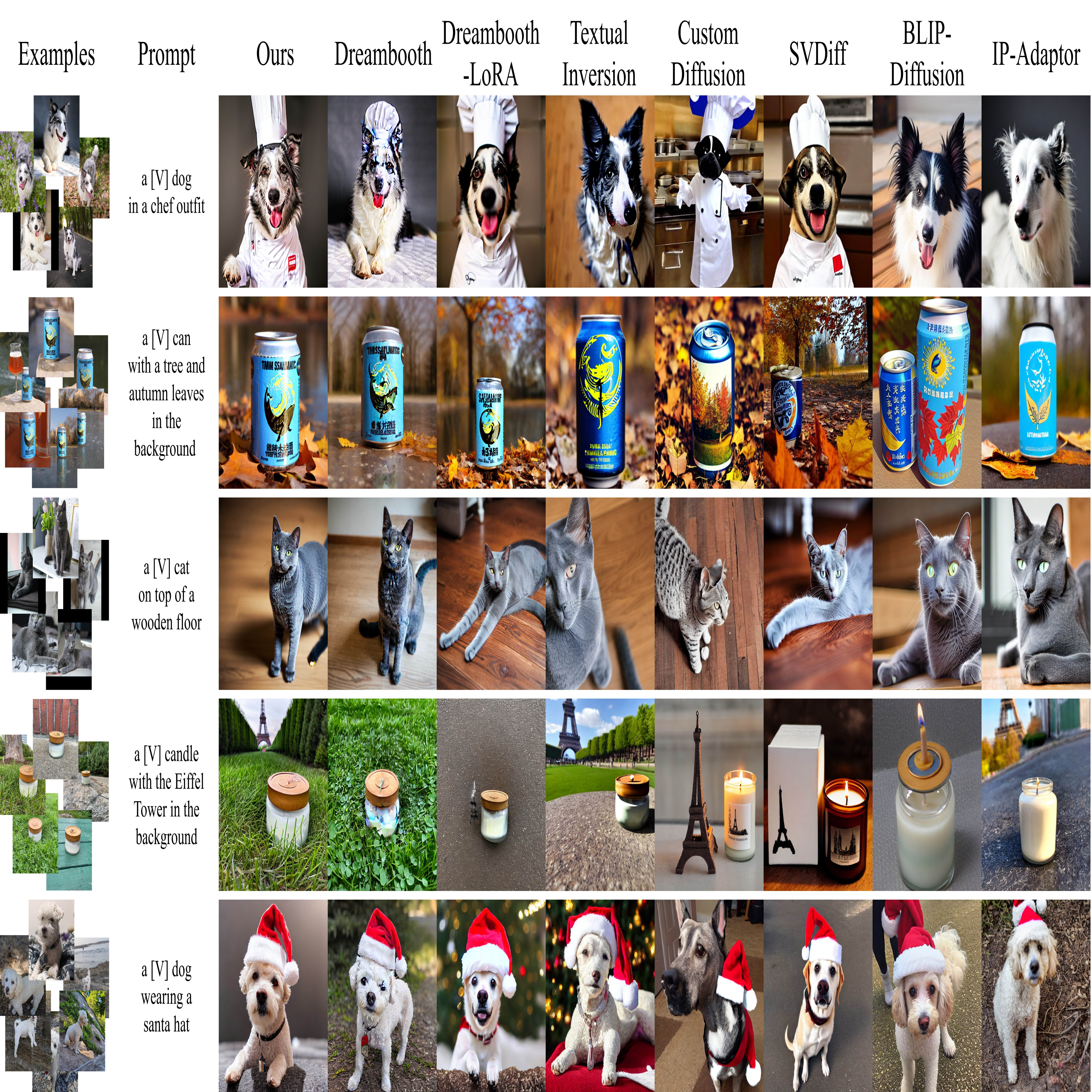}
    \caption{Visual comparison with baseline methods. Ours denotes DreamBooth with the proposed method.}
    \label{fig:visual_compar}
\end{figure}

\section{Experiments and Analysis}
\label{sec:experiments}
We evaluate the proposed method through qualitative and quantitative comparisons with baselines across diverse backbones. We further conduct three ablation studies to examine the practical implications of the proposed formulation in the context of personalized diffusion models. Further analysis, including failure cases, is provided in Appendix~\ref{sec:app_results}.

\textbf{Implementation Details.} 
We evaluate our method and baselines on the widely used personalization benchmark~\citep{ruiz2023dreambooth}, which includes 30 subjects, each with up to 6 example images. To assess the generality of the proposed method, we evaluate it across different backbones, ranging from Stable Diffusion v1.5 (SD-1.5)~\citep{rombach2022high} to Stable Diffusion-XL (SD-XL)~\citep{podell2023sdxl} and Stable Diffusion-3.0 (SD-3.0)~\citep{sd3}. Since both SD-XL and SD-3.0 discourage updating all parameters due to computational inefficiency, we focus our evaluation on parameter-efficient methods. For baseline comparisons, we consider DreamBooth (DB)~\citep{ruiz2023dreambooth}, Textual Inversion~\citep{gal2022image}, Custom Diffusion (CD)~\citep{kumari2023multi}, SVDiff~\citep{han2023svdiff}, and OFT~\citep{qiu2023controlling}, as well as an extension based on LoRA~\citep{hu2022lora}. While we attempted to reproduce OFT using the official codebase, it did not produce usable outputs under our setting. As a result, we report its performance based on the results presented in the original paper.
We also include BLIP-Diffusion~\citep{li2023blip} and IP-Adapter~\citep{chen2023subject} for comprehensive comparisons. For more details on the setup and implementation of individual experiments, please refer to Appendix~\ref{appendix:details}.


\textbf{Evaluation Methods.}
We evaluate each method based on the images generated from evaluation prompts. For each subject, we use the 25 evaluation prompts provided in the benchmark, generating 4 images per prompt, resulting in 100 images per subject and a total of 3,000 images across all 30 subjects. We evaluate model performance using three metrics: DINO, CLIP-T, and CLIP-I. We use pretrained DINO ViT-S/16~\citep{caron2021emerging} and CLIP ViT-B/32~\citep{radford2021learning} models to extract image and text embeddings. CLIP-I and DINO measure subject fidelity by computing the cosine similarity between each generated image and the corresponding input reference images. CLIP-T evaluates text-image alignment based on cosine similarity between the prompt and the generated image. A broader range of results is provided in Appendix~\ref{sec:app_results}. 

\begin{figure*}[t]
\centering
\begin{minipage}{0.491\linewidth}
\centering
\begin{scriptsize}
\captionof{table}{Quantitative comparison of baseline methods.}
\label{tab:quant_compar}
\setlength{\tabcolsep}{5pt} 
\begin{tabular}{lcccc}
\toprule
\textbf{Method} & \textbf{DINO ${\uparrow}$} & \textbf{CLIP-T ${\uparrow}$} & \textbf{CLIP-I ${\uparrow}$} & \textbf{Rank} \\
\midrule
\midrule
\rowcolor{gray!15}
\multicolumn{5}{c}{Few-shot personalization methods} \\
Ours                & \textbf{0.6394} & 0.2976 & \textbf{0.7948} & \textbf{1} \\
DreamBooth          & 0.6028 & 0.2793 & 0.7881 & 3 \\
DreamBooth-LoRA     & 0.5778 & 0.3095 & 0.7731 & 6 \\
Textual Inversion   & 0.5342 & 0.2601 & 0.7539 & 9 \\
Custom Diffusion    & 0.5568 & 0.3154 & 0.6479 & 7 \\
SVDiff              & 0.3839 & \textbf{0.3194} & 0.6886 & 8 \\
OFT                 & 0.6320 & 0.2370 & 0.7850 & 4 \\
\midrule
\rowcolor{gray!15}
\multicolumn{5}{c}{Conditioning with external resources} \\
BLIP-Diffusion      & 0.5943 & 0.2865 & 0.7935 & 5 \\
IP-Adaptor          & 0.6304 & 0.2635 & 0.8318 & 2 \\
\bottomrule
\end{tabular}
\end{scriptsize}
\end{minipage}
\hfill
\begin{minipage}{0.499\linewidth}

\centering
\includegraphics[width=\linewidth]{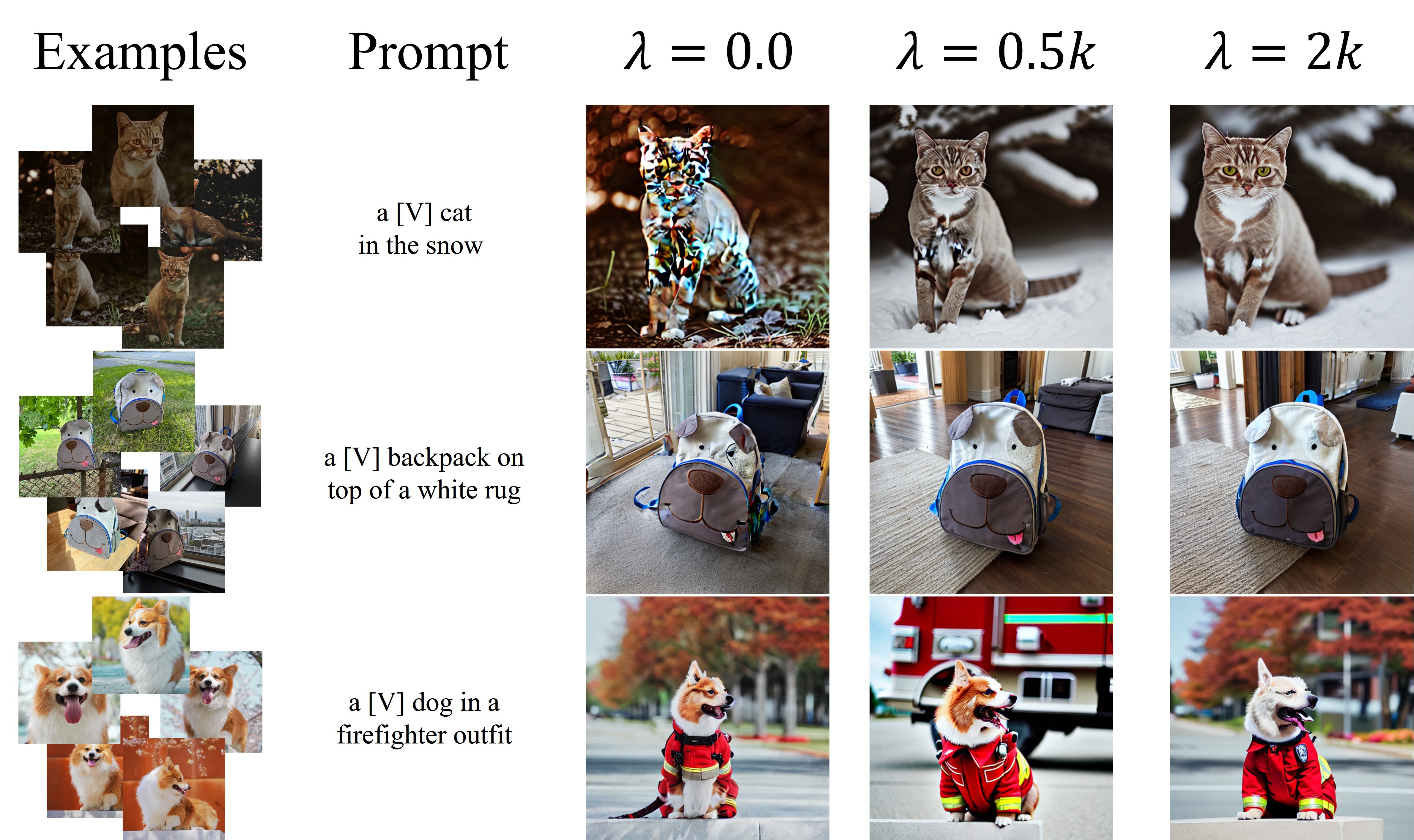}
\vspace{-1.75em} 
\caption{Visual comparison by regularization strength.}
\label{fig:abl_visual_compar}
\end{minipage}
\end{figure*}

\subsection{Experimental Results}
\label{sec:exp_results}


\textbf{Comparisons across Backbone Networks.} Table~\ref{tab:compar_backbone} presents quantitative results showing the effect of replacing the baseline objective with the proposed objective across different backbones. The method improves either both image fidelity and text alignment, or improves one without reducing the other. This is remarkable since the two objectives are typically in conflict, where gains in one often lead to losses in the other. In SD-1.5 with DB, the DINO score increases substantially, while the CLIP-T score also improves. Moreover, we observe consistent improvements when the proposed objective is combined with other personalization strategies. For example, CD in SD-1.5 and SD-XL, as well as DB-LoRA in SD-3.0, show gains across all metrics, while the remaining settings still yield meaningful improvements.

A key aspect of the performance improvements is that our method substantially enhances image fidelity, while maintaining text alignment at a high level (around 0.3). This indicates that the proposed method accelerates the learning of novel concepts while preserving the knowledge encoded in the pretrained model. As shown in Figure~\ref{fig:backbone_compar}, both baseline and our method produce results that faithfully reflect the input prompts. However, Figure~\ref{fig:backbone_compar} also shows that baseline methods fail to capture the characteristics of the reference image. For example, in the case of CD with SD-XL, the identity of the dog is not preserved by the baseline, while our method achieves a clear improvement. These observations are consistent with the quantitative improvements and highlight that the proposed approach contributes both robustness and versatility.

\textbf{Comparisons with Baseline Methods.} As shown in Figure~\ref{fig:visual_compar}, we compare the proposed method with various baseline approaches. The proposed method demonstrates superior performance in both fidelity and prompt alignment compared to the baselines. Among the baselines, DB and DB-LoRA show competitive performance. However, DB either fails to incorporate the prompt or exhibits artifacts in the third row. DB-LoRA tends to underrepresent subject identity in the first row. Other methods generally underperform. CD struggles to capture fine-grained subject details. Textual Inversion and SVDiff tend to focus more on prompt alignment than subject fidelity. BLIP-Diffusion and IP-Adapter yield results that preserve the overall subject structure but often miss identity-specific features. In contrast, our method faithfully captures the fine-grained details of the input images while reflecting the text prompts. We conduct the human evaluation in Appendix~\ref{sec:user_study}, providing a detailed analysis across multiple criteria to assess perceptual aspects as well.


These qualitative observations are consistent with the quantitative results summarized in Table~\ref{tab:quant_compar}. When ranking models jointly on fidelity (DINO and CLIP-I) and alignment (CLIP-T), our method is ranked first. This indicates that the proposed approach provides a more balanced improvement across all metrics compared to the baselines. Since other baselines achieve higher scores on one metric but lower on the other, no baseline simultaneously offers comparable performance on both. For example, while SVDiff achieves the highest CLIP-T score, its overall ranking remains low. These observations reveal a clear trade-off between text-image alignment and subject fidelity. Nonetheless, our method alleviates this trade-off and achieves superior performance in both quantitative and qualitative evaluations.

\begin{figure}[t]
    \centering
    \includegraphics[width=0.99\linewidth]{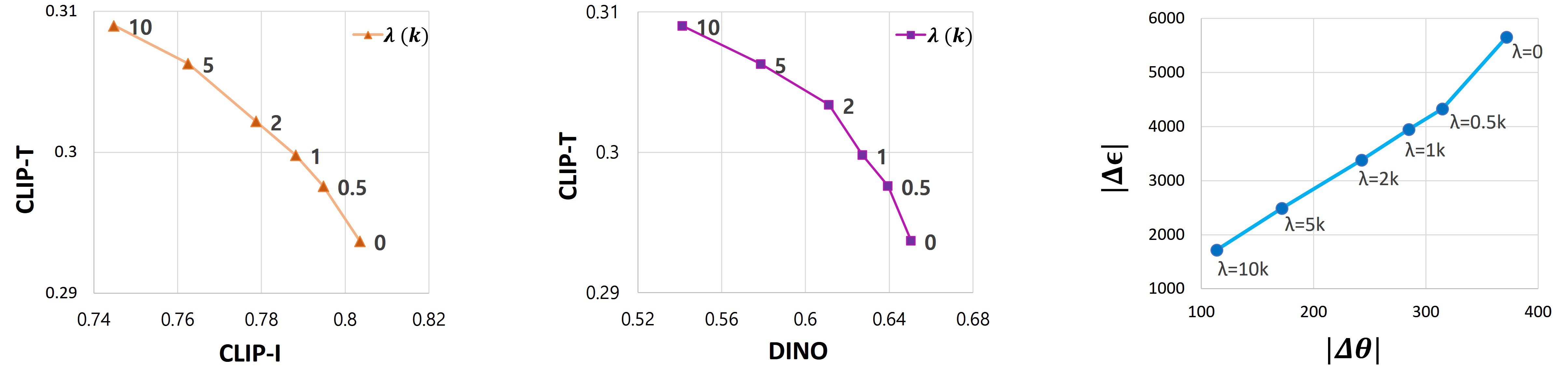}
    \vspace{-1.25em}
    \caption{Trade-off curves between image fidelity and text alignment (left and middle) and qualitative output variations (right), both evaluated across different $\lambda$ values.}
    
    \label{fig:tradeoff_combined}
\end{figure}

\subsection{Ablation Studies}
\label{abl}

\textbf{Balancing Adaptation and Preservation.} We analyze how the proposed regularization affects the quality of personalized outputs. Figure~\ref{fig:abl_visual_compar} shows the effect of varying the regularization weight $\lambda$. With higher $\lambda$, the generated images align more closely with the prompt but often miss key subject attributes. With lower $\lambda$, they capture subject-specific features, but may collapse to imitating the input images or introduce visual artifacts. A similar pattern is quantitatively confirmed in Figure~\ref{fig:tradeoff_combined}. These experiments investigate the behavior of image fidelity and text alignment under varying regularization strengths. A larger $\lambda$ increases CLIP-T, indicating stronger text alignment and greater preservation of the pretrained distribution. In contrast, a smaller $\lambda$ leads to higher DINO and CLIP-I scores, indicating closer adaptation to the target subject in terms of visual fidelity. This observation suggests that the proposed objective simultaneously addresses both adaptation and preservation while enabling a controllable balance between them.


\textbf{Empirical Validation of the Lipschitz Condition.} We assess whether diffusion models follow our claim during personalization under the assumptions. To examine whether the Lipschitz continuity holds in diffusion models, we measure the parameter deviation $\Delta \theta$ and the corresponding output difference $\Delta \epsilon$ between the pretrained and personalized models. Figure~\ref{fig:tradeoff_combined} illustrates the effect of applying Lipschitz regularization to the personalized diffusion model. As the regularization strength~$\lambda$ increases, both~$\Delta \theta$ and~$\Delta \epsilon$ decrease proportionally. This behavior indicates that the change in the output of the model is bounded by the change in parameters. The result provides empirical support for our theoretical claim.

\textbf{Training Efficiency.} We measure the training time required for personalization to assess the proposed method compared to prior regularization across different backbones. Figure~\ref{fig:training_time} reports averaged results from 10 trials under the same environment for each method. The results shows that our approach is substantially more efficient than methods relying on pre-sampling. The prior preservation strategy consumes the majority of training time for generating prior samples, whereas our algorithm does not require this bottleneck as outlined in Algorithm~\ref{algo}. The improvement is particularly evident in parameter-efficient baselines. Their shorter training time renders the cost of pre-sampling dominant. In SDXL-CD, the training time is reduced to less than one-fifth of the baseline. Other baselines also achieve more than a twofold reduction. As training efficiency is essential for the feasibility of real-world applications~\citep{cho2024hollowed, chen2025para}, such improvements represent a key advantage of the proposed method alongside its performance gains.

\section{Conclusion}
In this paper, we demonstrate that existing personalization objectives are misaligned for preventing distributional drift when adapting to novel concepts. Therefore, we introduce a Lipschitz-based regularization objective that bounds distributional shift under a Lipschitz constraint. The proposed formulation is simple yet theoretically grounded, aligning the training objective with the true goals of personalization. Through extensive experiments across diverse backbones and baselines, we validate consistent improvements in both subject fidelity and text alignment. Ablation and visual analyses further highlight the necessity of the regularization, while the method requires no additional time-consuming stages and provides explicit control over the adaptation–preservation trade-off. In conclusion, we contribute a principled approach to personalization that integrates new concepts without unintended deviation from the pretrained distribution. We hope our findings provide a practical reference for balancing adaptation and preservation in post-training across diverse domains. 

\section*{Limitations and Future work}
\label{discussion}


Although our work advances over prior limitations and improves overall performance, it does not guarantee consistent gains across every metric, prompt, or subject. This is because the required degree of adaptation to a novel concept and the level of preservation needed to maintain pretrained behavior can vary across subjects and backbones.
Future methods should move beyond retaining prior knowledge and instead focus on better integrating novel subject characteristics to enable more effective and generalizable personalization. These challenges highlight the need for adaptive personalization training strategies~\citep{xuhong2018explicit, asadi2022faster}. One potential direction is to adaptively weight regularization across parameters. In particular, EWC~\citep{kirkpatrick2017overcoming} can assign more precise adaptive weights using the Fisher information $F_i$ for each parameter $\theta_i$. However, $F_i$ is computed (or approximated) from the log-likelihood gradients evaluated on the original training dataset $\mathcal{D}_{\mathrm{pretrain}}$ and the corresponding pretrained parameters $\theta_{\mathrm{pretrain}}$. Because the original pretraining data are not available in practical personalization scenarios, this approach is difficult to apply in practice. Nevertheless, if Fisher information could be estimated reliably in this setting, it would provide further benefits for adaptive optimization.

\section*{Acknowledgements}
This work was supported by the National Research Foundation of Korea (NRF) grant funded by the Korea government (MSIT) (No. RS-2025-25424122, Controllable Image Generation in Diffusion Models with Distribution Preservation; No. RS-2024-00345809, Research on AI Robustness Against Distribution Shift in Real-World Scenarios; and No. RS-2023-00222663, Center for Optimizing Hyperscale AI Models and Platforms).

\bibliography{iclr2026_conference}
\bibliographystyle{iclr2026_conference}

\newpage
\appendix
\section*{Appendix}

\section{Role of $\lambda$ and Practical Regularization in Personalization}
Although $\lambda$ is motivated by the Lipschitz property described in Theorem~\ref{thm:2}, 
its practical value in personalization is determined by the characteristics of each subject. 
As shown in line~9 of Algorithm~\ref{algo}, the regularizer coefficient $\lambda$ balances the new learning signal and the preservation of the pretrained capacity. Since learning dynamics differ across subjects, the optimal $\lambda$ varies as well. Moreover, computing the Lipschitz constant of $\theta$ requires access to the pretrained data distribution, whereas personalization provides only a handful of subject images, making it challenging to recover distributional characteristics necessary for estimating the exact Lipschitz constant. This limitation highlights the need for empirical tuning in practice.

We attempt to estimate the Lipschitz constant using a Monte Carlo approximation over $x$ and $t$, combined with a randomized singular-value estimate of the Jacobian for SD1.5. The estimated Lipschitz constant is approximately $184$. As shown in Figure~\ref{fig:abl_visual_compar}, this magnitude tends to prioritize image fidelity over text alignment. Figure~\ref{fig:tradeoff_combined} further shows that choosing an appropriate value of $\lambda$ is important for balancing these two aspects. This estimated constant is smaller than the reported $\lambda = 500$. We note that this difference arises because, as shown in Algorithm~1, the coefficient $\lambda$ serves not only as a Lipschitz-motivated term but also as a balancing factor between preserving pretrained capacity and learning subject-specific guidance. For this reason, personalization may require using a larger $\lambda$, and doing so does not violate Theorem~\ref{thm:2}. We also noticed that the theoretical Lipschitz $\lambda$ and the practical objective coefficient $\lambda$ were used with a bit of overlap in terminology, and we will clarify this distinction. We note that choosing a different $\lambda$ only changes the strength of the regularization and does not violate the bound on distribution preservation established in Theorem~\ref{thm:2}.

Finally, we discuss the norm choice related to Remark~\ref{remark:3}. We adopt the $\ell_2$ norm as clarified in Remark~\ref{remark:3}. We observed that using the raw absolute norm $\lvert \cdot \rvert$ leads to unstable updates, whereas using $\lvert \cdot \rvert_2^2$ consistently yields stable training. This observation suggests that our formulation shares a degree of similarity with \cite{kirkpatrick2017overcoming}, as the squared $\ell_2$ norm yields larger gradients for larger parameter differences and thus provides a mild form of parameter-wise adaptivity.


\section{Discussion}
We provide a further discussion on the relationship between evaluation metrics and the claims made in diffusion-based personalization models. Our method introduces a flexible mechanism to balance image fidelity and text alignment. As a result, we observe model variants that score higher in either image fidelity or text alignment. While the main paper is structured around results that show improvements across all metrics, we also observed that some variants with slightly lower text-alignment scores nevertheless produced images that appeared qualitatively satisfactory and well aligned with the prompts. Although this observation is subjective, it raises an interesting dilemma. Demonstrating superior performance over baselines on every metric does not guarantee improved qualitative alignment, and this misalignment highlights a core challenge in personalization. While the proposed approach successfully manages the trade-off between metrics and often produces results that appear qualitatively superior, we emphasized improvements across all quantitative measures to highlight numerical gains. This underscores a broader challenge in diffusion research: the ambiguity and limitations of current metrics. We believe this calls for a deeper examination of how evaluation metrics are used and interpreted in generative modeling.

\section{Ethics Statement}
\label{impact}
Although our work takes an analytical approach, the broader field of personalized image generation requires careful consideration due to possible misuse. These models could be used to create misleading or harmful images, which may negatively affect individuals or society. This highlights the need to consider ethical implications alongside technical progress in this area of research.

\section{Reproducibility Statement}
We ensured a controlled and transparent experimental environment to support fair comparison and reproducibility. For baseline comparisons, we followed the original hyperparameter settings and code configurations whenever possible. We fixed random seeds during training and evaluation to further enhance reproducibility. Additional details are provided in Appendix~\ref{appendix:details}. We will release the code together with the paper.

\section{LLM Usage}
We employed large language models (LLMs) in a supportive role. We used them primarily for translation, literature search and verification, and for prototyping code consistent with our intentions. The underlying ideas and design were conceived and developed by the authors.
\newpage
\section{Proof of Theorem 1}
\label{appendix:thm1}

\textbf{Theorem 1.}
\label{app:thm:1}
Let \(p^{*}(x,c)\) denote the reference distribution, and let the model parameters \(\theta_{\mathrm{base}}\) have distribution \(p_{\theta_{\mathrm{base}}}(x,c)\) satisfying, for any $\varepsilon>0$,
\[
\bigl|p^{*}(x,c) - p_{\theta_{\mathrm{base}}}(x,c)\bigr| \;<\; \varepsilon.
\]

Suppose there exists a measurable set
\(D\subset\mathcal X\times\mathcal C\) such that
\[
p_{\mathrm{adapt}}(D)=\gamma,
\qquad
p^{*}(D)=\delta,
\qquad
\gamma\gg\delta>0 .
\]

The model is adapted by gradient descent on the denoising loss
\(\mathcal L_{\mathrm{denoise}}\) from Eq.~\ref{eq:simple}, trained on
\((x,c)\sim p_{\mathrm{adapt}}\):
\[
\theta_{t+1}
  = \theta_t
  - \eta\nabla_\theta \mathcal L_{\mathrm{Denoise}}(\theta_t),
\quad\text{where }\theta_{0}=\theta_{\mathrm{base}}.
\]
After sufficient iterations \(t\),
$
p_{\theta_{t}}\longrightarrow p_{\mathrm{adapt}},
$
under universal approximation and convergence assumptions.

Then
\[
D_{KL}\!\bigl(p^{*} \,\|\, p_{\theta_{base}}\bigr)
\;<\;
D_{KL}\!\bigl(p^{*} \,\|\, p_{\theta_t}\bigr).
\]

\begin{proof}

By the three-point property of Bregman divergences~\cite{nielsen2007bregman},  we get
\[
D_{KL}\bigl(p^{*}\,\|\,p_{\theta_t}\bigr)
=
D_{KL}\bigl(p^{*}\,\|\,p_{\theta_{\mathrm{base}}}\bigr)
+D_{KL}\bigl(p_{\theta_{\mathrm{base}}}\,\|\,p_{\theta_t}\bigr)
+\langle\,p^{*}-p_{\theta_{\mathrm{base}}},\;
               \log(p_{\theta_t})-\log(p_{\theta_{\mathrm{base}}})\bigr\rangle.
\]

Applying the Cauchy–Schwarz inequality to the rightmost term,  
\[
  \bigl|\langle p^* - p_{\theta_{\mathrm{base}}},\log(p_{\theta_t})-\log(p_{\theta_{\mathrm{base}}})\rangle\bigr|
  \le \|p^* - p_{\theta_{\mathrm{base}}}\|\;\|\log(p_{\theta_t})-\log(p_{\theta_{\mathrm{base}}})\|.
\]

It follows from the convergence assumption that
\[
\|\log(p_{\theta_t})-\log(p_{\theta_{\mathrm{base}}})\| < M  \quad (M < \infty).
\]
Then,
\[
  \|p^* - p_{\theta_{\mathrm{base}}}\|\;\|\log(p_{\theta_t})-\log(p_{\theta_{\mathrm{base}}})\|
  < \varepsilon\,M,
\]
the inner‐product term is negligible.

Hence,
\[
D_{KL}\bigl(p^{*}\,\|\,p_{\theta_t}\bigr)
\;\approx\;
D_{KL}\bigl(p^{*}\,\|\,p_{\theta_{\mathrm{base}}}\bigr)
+D_{KL}\bigl(p_{\theta_{\mathrm{base}}}\,\|\,p_{\theta_t}\bigr).
\]

Applying Pinsker’s inequality~\cite{cover2006elements} on the biased region \(D\) we obtain  
\[
D_{KL}\bigl(p_{\theta_{base}}\,\|\,p_{\theta_t}\bigr)
  \;\ge\; 2\,(\gamma-\delta)^{2}.
\]

Then, 
\[
D_{KL}\bigl(p^{*}\,\|\,p_{\theta_{\mathrm{base}}}\bigr)
+D_{KL}\bigl(p_{\theta_{\mathrm{base}}}\,\|\,p_{\theta_t}\bigr)
  \;\ge\;
    D_{KL}\bigl(p^{*}\,\|\,p_{\theta_{base}}\bigr)
  + 2\,(\gamma-\delta)^{2}.
\]

Putting everything together, we obtain
\[
\begin{aligned}
D_{KL}\bigl(p^{*}\,\|\,p_{\theta_t}\bigr)
&=D_{KL}\bigl(p^{*}\,\|\,p_{\theta_{\mathrm{base}}}\bigr)
+D_{KL}\bigl(p_{\theta_{\mathrm{base}}}\,\|\,p_{\theta_t}\bigr)
+\langle\,p^{*}-p_{\theta_{\mathrm{base}}},\;
               p'_{\theta_t}-p'_{\theta_{\mathrm{base}}}\bigr\rangle.\\
&\;\approx\;
  D_{KL}\bigl(p^{*}\,\|\,p_{\theta_{\mathrm{base}}}\bigr)
  +D_{KL}\bigl(p_{\theta_{\mathrm{base}}}\,\|\,p_{\theta_t}\bigr)\\
&\;\ge\;
  D_{KL}\bigl(p^{*}\,\|\,p_{\theta_{\mathrm{base}}}\bigr)
  +2(\gamma-\delta)^2\\
&\;>\;
  D_{KL}\bigl(p^{*}\,\|\,p_{\theta_{\mathrm{base}}}\bigr).
\end{aligned}
\]

Then,
\[
D_{KL}\bigl(p^{*}\,\|\,p_{\theta_{base}}\bigr)
\;<\;
D_{KL}\bigl(p^{*}\,\|\,p_{\theta_t}\bigr),
\]
as claimed.
\end{proof}

\clearpage

\section{Proof of Theorem 2}
\label{appendix:thm2}

\textbf{Theorem 2.}
If the diffusion model $\varepsilon_\theta$ is Lipschitz continuous in $\theta$, then for any two parameter sets $\theta_1$ and $\theta_2$,  
there exists a constant $\lambda > 0$ such that
\begin{equation}
D_{KL}\bigl(p_{\theta_1} \,\|\, p_{\theta_2}\bigr)
\;\le\; \lambda \cdot \|\theta_1 - \theta_2\|_k.
\end{equation}

\begin{proof}

Given that the composition of Lipschitz functions preserves Lipschitz continuity~\citep{neyshabur2015norm, asadi2018lipschitz} and that attention mechanisms admit finite Lipschitz constants on compact input domains~\citep{castin2024smooth}, 
we assume $\varepsilon_\theta(x,t)$ is Lipschitz in $\theta$ with finite constant $L$. Hence for all \(x,t\) and any \(\theta_1,\theta_2\),
\[
\bigl\|\varepsilon_{\theta_1}(x,t)-\varepsilon_{\theta_2}(x,t)\bigr\|
\;\le\;
L\,\|\theta_1-\theta_2\|_k.
\]

By Tweedie’s formula~\cite{efron2011tweedie,ho2020denoising,song2020score},
\[
s_\theta(x,t)
=-\frac{\varepsilon_\theta(x,t)}{\sigma_t}.
\]

Scalar multiplication preserves Lipschitz continuity, so
\[
\|s_{\theta_1}(x,t)-s_{\theta_2}(x,t)\|
\le L'\,\|\theta_1-\theta_2\|_k.
\]
It follows that \(s_\theta\) is Lipschitz in \(\theta\) with constant $L' \;=\; L\sup_{t}\sigma_t^{-1}.$\footnote{
In practice, \(\sigma_t\) is never exactly zero: implementations either avoid the endpoint(s)
or enforce a small floor \(\sigma_t \ge \sigma_{\min}>0\) for numerical stability.
Hence \(\sup_t \sigma_t^{-1}<\infty\) in our setting.
}

By the probability‐flow ODE formulation\footnote{Following standard practice, we omit the prompt \(c\) and write \(\varepsilon_\theta(x,t)\) and \(\log p_\theta(x)\).  All statements extend directly to the conditional case by replacing \(\varepsilon_\theta(x,t)\) with \(\varepsilon_\theta(x,c,t)\) and \(\log p_\theta(x)\) with \(\log p_\theta(x,c)\).}~\cite{song2020score, song2019generative},
\[
\log p_\theta(x)
=\log p_T(x_T)
-\int_{0}^{T}s_\theta(x_t,t)\,\frac{dx_t}{dt}\,dt.
\]

Since \(\log p_T(x_T)\) is constant with respect to \(\theta\) and integration preserves Lipschitz continuity, 

\[
\bigl|\log p_{\theta_1}(x)-\log p_{\theta_2}(x)\bigr|
\;\le\;
L''\,\|\theta_1-\theta_2\|_k \quad for \;\forall x \in X  \]

holds with 
$
L'' \;=\; \int_{0}^{T}L'dt
\;=\;
L\int_{0}^{T}\sigma_t^{-1}\,dt.
$

Applying the triangle inequality to the definition of KL divergence then gives
\begin{align*}
D_{KL}(p_{\theta_1}\|p_{\theta_2})
&= \int p_{\theta_1}(x)\,\bigl(\log p_{\theta_1}(x)-\log p_{\theta_2}(x)\bigr)\;dx \\  
&\le \int p_{\theta_1}(x)\,\bigl|\log p_{\theta_1}(x)-\log p_{\theta_2}(x)\bigr|\;dx \\  
&\le L''\,\|\theta_1-\theta_2\|_k.
\end{align*}
This completes the proof with \(\lambda=L''\).
\end{proof}

\clearpage
\section{Additional Implementation Details}
\label{appendix:details}
\textbf{Ours.} All experiments are conducted on a single NVIDIA RTX 3090 GPU. We use the special token \texttt{sks}, or \texttt{new1} depending on the baseline. For the proposed Lipschitz‐bound regularization, we compare both the $L_1$ and $L_2$ norm variants and observe no meaningful difference in performance; accordingly, we adopt the more widely used $L_2$ norm. We also experiment with integrating DreamBooth’s prior‐preservation loss but do not obtain improvements, suggesting a conflict between the objectives.

In experiments across different methods, we primarily used DreamBooth on SD~1.5, as this backbone is the most widely adopted and thus provides a fair basis for comparing diverse approaches. The corresponding hyperparameters are listed above, and other methods were also evaluated on the same backbone for consistency. For DreamBooth, we re-implemented the official codebase to allow more detailed experimentation. This ensured fixed random seeds and controlled variables for fair comparison.  Finally, to measure the time required for the personalization process, we performed ten runs for each backbone–method combination and report the average.

In experiments across different backbones, we aimed to evaluate the effect of the proposed objective under comparable settings using existing codebases. The hyperparameters were as follows. For DreamBooth on SD-1.5, we set the regularization weight to $\lambda = 500$, trained with a batch size of 1 and a learning rate of $2\times10^{-6}$ for 1,000 iterations. For Custom Diffusion on SD-1.5, we used $\lambda = 175$ with a learning rate of $3\times10^{-5}$ for 250 iterations. For SD-XL, we considered two settings: $\lambda = 50$ with a learning rate of $5\times10^{-5}$ for 250 iterations, and $\lambda = 10$ with a learning rate of $8\times10^{-5}$. For SD-3.0, we used a learning rate of $1\times10^{-4}$, $\lambda = 10.1$, and 500 iterations. For LoRA, the settings were as follows: on SD-1.5, we used a learning rate of $1\times10^{-4}$, $\lambda = 1\times10^{-7}$, and 500 iterations; on SD-XL, a learning rate of $5\times10^{-4}$, $\lambda = 1\times10^{-6}$, and 500 iterations; and on SD-3.0, a learning rate of $4\times10^{-4}$, $\lambda = 15$, and 680 iterations. The choice of backbone and personalization method leads to different hyperparameter settings and training behaviors.

\textbf{Baselines.} We provide additional implementation details for the baselines used in our experiments in Section~\ref{sec:experiments}. The proposed method and DreamBooth are implemented and trained based on the Diffusers library\footnote{\url{https://github.com/huggingface/diffusers}}. For Textual Inversion~\cite{gal2022image}\footnote{\url{https://github.com/huggingface/diffusers/tree/main/examples/textual_inversion}}, Custom Diffusion~\cite{kumari2023multi}\footnote{\url{https://github.com/huggingface/diffusers/tree/main/examples/custom_diffusion}}, SVDiff~\cite{han2023svdiff}\footnote{\url{https://github.com/mkshing/svdiff-pytorch}}, and LoRA~\cite{hu2022lora}\footnote{\url{https://github.com/huggingface/peft/tree/main/examples/lora_dreambooth}}, we use available implementations and follow their default training settings. OFT~\cite{qiu2023controlling}\footnote{\url{https://github.com/zqiu24/oft}} is trained using the official codebase, but we observed poor performance and were unable to fully reproduce the reported results; hence, we cite the numbers from the original paper. BLIP-Diffusion~\cite{li2023blip}\footnote{\url{https://github.com/salesforce/LAVIS/tree/main/projects/blip-diffusion}} and IP-Adapter~\cite{chen2023subject}\footnote{\url{https://github.com/tencent-ailab/IP-Adapter}} are evaluated using the released model weights on our benchmark through direct inference.

Moreover, experiments with different backbone models are conducted using the Diffusers library. 
For Stable Diffusion-XL~\citep{podell2023sdxl}, we follow the same Custom Diffusion reference as above. 
Since no official implementation of Custom Diffusion for Stable Diffusion-3.0~\citep{sd3} is available, we reimplemented this variant ourselves. 
For the LoRA version of Stable Diffusion-3.0, we followed the official implementation\footnote{\url{https://github.com/huggingface/diffusers/blob/main/examples/dreambooth/train_dreambooth_lora_sd3.py}}.

\textbf{Ablation Studies.} We provide additional implementation details for the ablation studies on our proposed personalization process in Section~\ref{abl}. Following the same training configuration as in Section~\ref{sec:experiments}, we fix the learning rate and number of training iterations, varying only the regularization strength $\lambda$ for the experiments measuring changes from the pretrained model (i.e., $\Delta\theta$ and $\Delta\epsilon$). For $\Delta \epsilon$, we measure the L2 distance between outputs generated using the unconditional class token from the pretrained and personalized models. The reported value is the accumulated L2 distance over all sampled outputs.

\textbf{Toy Experiment.} We validate our method using the toy 2D conditional diffusion model as described in Section~\ref{toy}. The setup generates 1,000 samples per class from 2D Gaussian distributions with a standard deviation of 0.5. Class labels range from 0 to 4, positioned at the vertices of a regular pentagon. A new sixth class (class 5) supports finetuning. The noise schedule follows a cosine-based formulation with $T=100$ timesteps. The model uses a one-hidden-layer MLP with ReLU activation and class/time embeddings. Initial training runs for 1,000 iterations to capture the base distribution effectively. Training for the new class is conducted for 5,000 iterations to ensure convergence within the experimental setup. We adopt a DreamBooth-like setup by jointly training the new class with class 0 (deep blue) data. Considering the problem’s complexity, we assign a large weight 100 to class 0 in order to clearly observe the effect of different objectives. When using smaller weights, the other classes tend to collapse toward the new data distribution. However, with weight 100, the objective’s influence on class 0 becomes apparent. For our proposed method, we apply Lipschitz regularization with respect to the pretrained weights, using a regularization strength of $\lambda = 50$. Even with a relatively small weight (e.g., 1) on the regularization term, its effect is clearly observable in this setup as shown in Section~\ref{sec:app_toy_abl}.

\clearpage
\section{Ablation on the Toy Experiment}
\label{sec:app_toy_abl}
We conduct an ablation on the toy experiment introduced in Section~\ref{toy} to visualize the trend of the proposed method. As shown in Figure~\ref{fig:app:lambda-ablation}, the relationship between the newly learned distribution and the original one varies with the regularization strength $\lambda$. Without Lipschitz regularization, all classes tend to collapse toward the newly introduced distribution. Notably, even with $\lambda=1$, the regularization effect becomes evident. In addition, increasing $\lambda$ progressively enforces preservation of the original distribution, and the newly learned class is mapped closer to one of the existing modes. These results demonstrate that the proposed objective effectively encourages distribution preservation and supports the balancing behavior observed in Section~\ref{abl}. Figure~\ref{fig:app_abl_output} provides additional visual comparisons across different values of $\lambda$. Similar to the earlier analysis, these results highlight how the regularization interpolates between adapting to new concepts and preserving prior distributions.  

\vspace{5mm}

\begin{figure}[h]
  \centering
  \begin{minipage}[t]{0.19\linewidth}
    \centering
    \textbf{$\lambda=0$}\\[0.3em]
    \includegraphics[width=\linewidth]{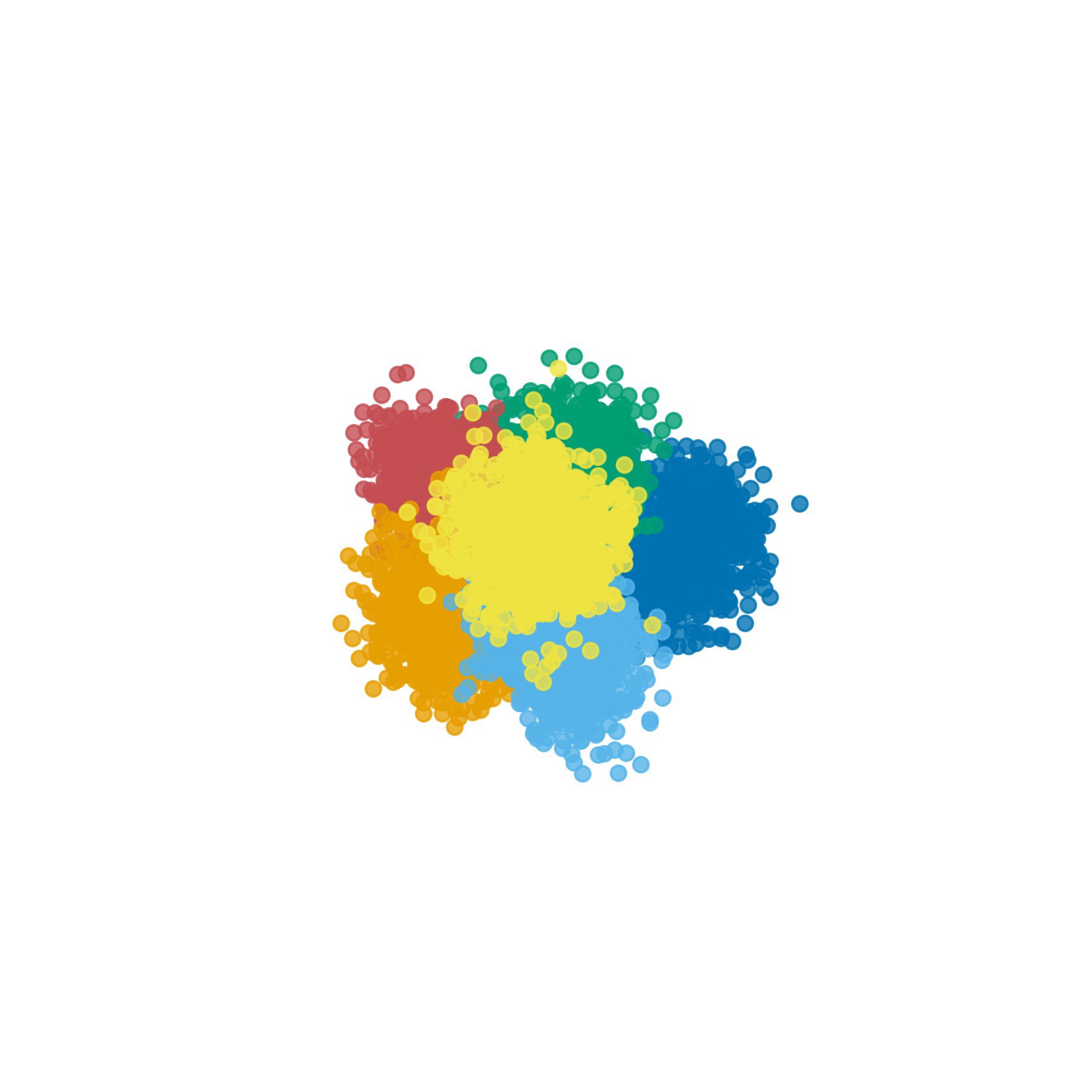}
  \end{minipage}%
  \hfill
  \begin{minipage}[t]{0.19\linewidth}
    \centering
    \textbf{$\lambda=1$}\\[0.3em]
    \includegraphics[width=\linewidth]{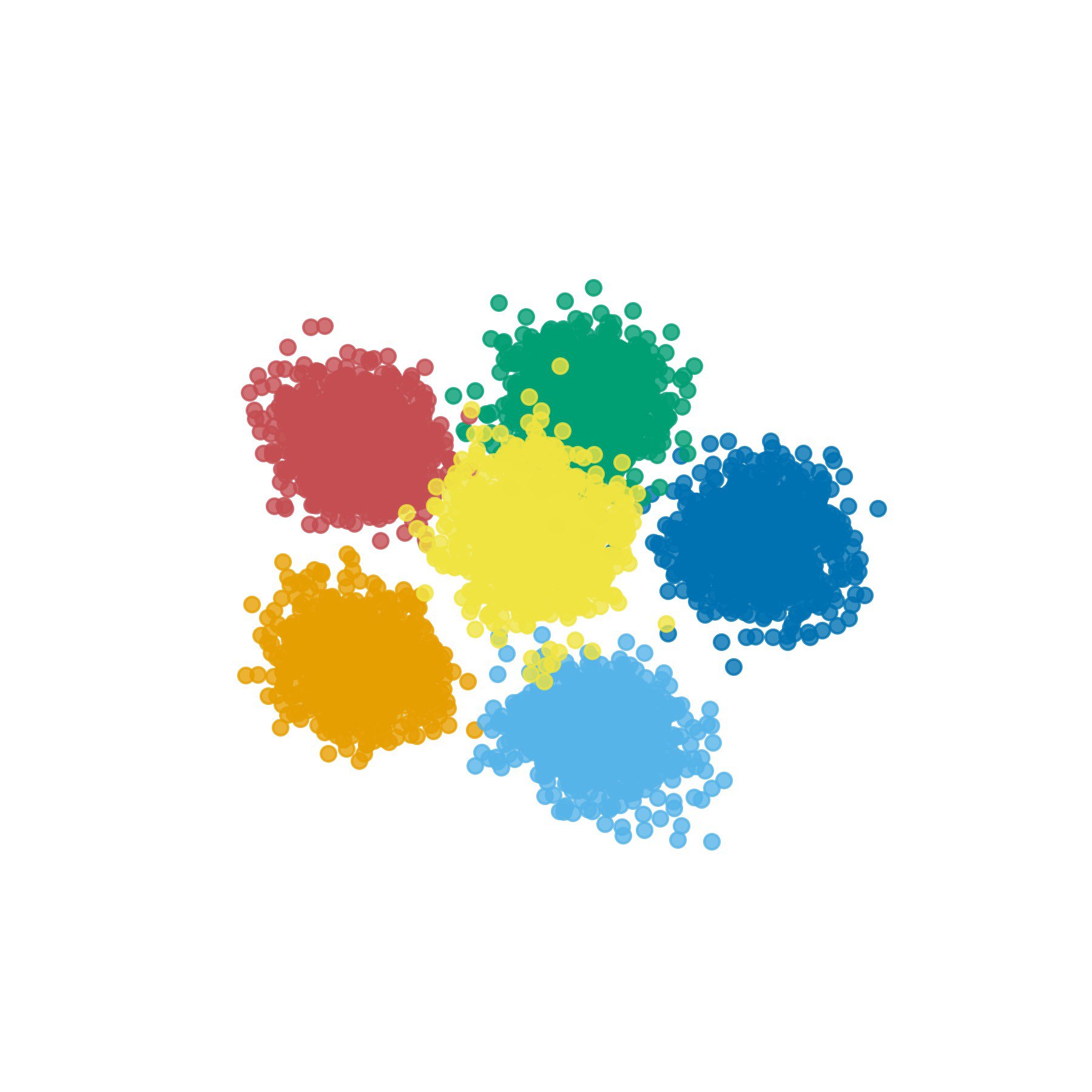}
  \end{minipage}%
  \hfill
  \begin{minipage}[t]{0.19\linewidth}
    \centering
    \textbf{$\lambda=10$}\\[0.3em]
    \includegraphics[width=\linewidth]{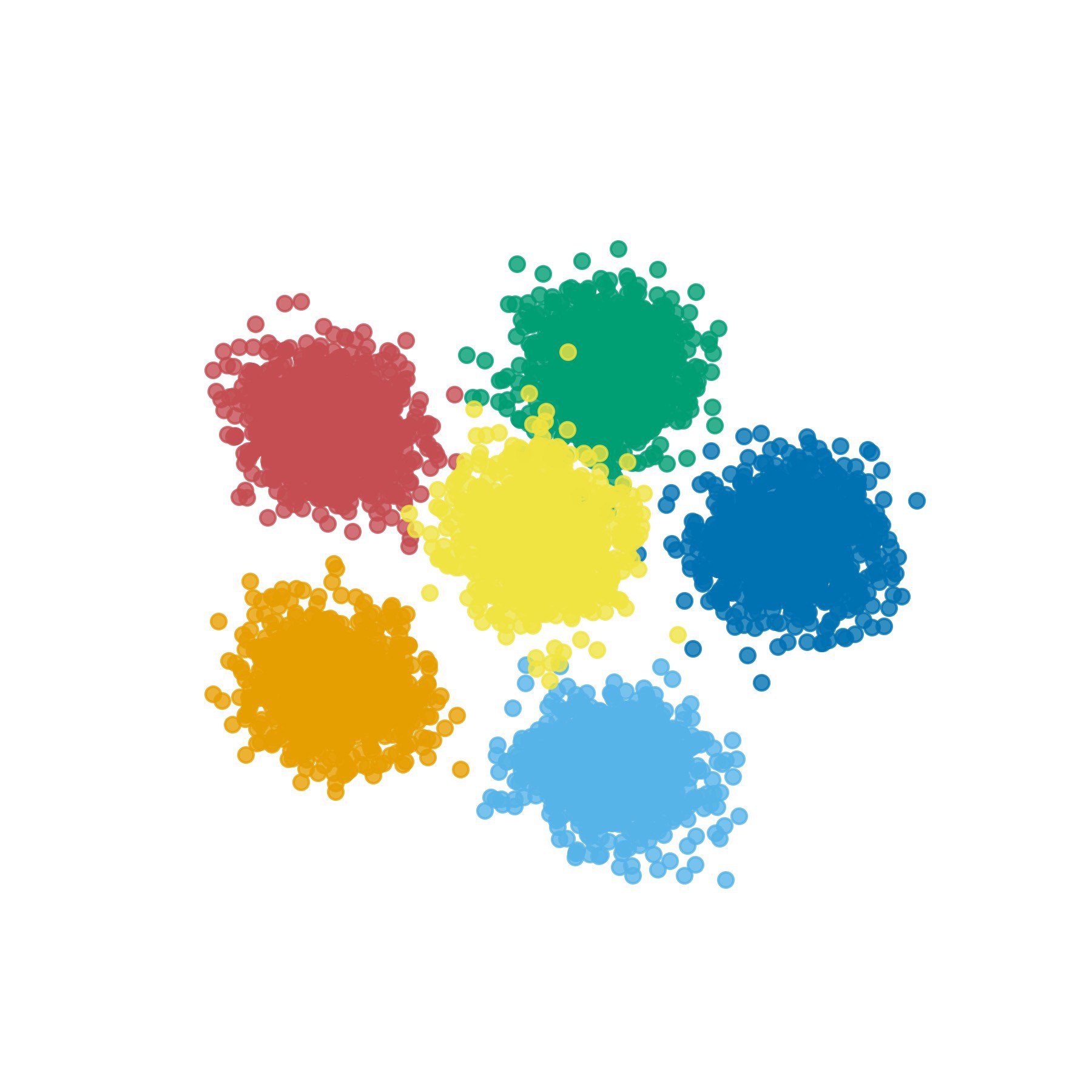}
  \end{minipage}%
  \hfill
  \begin{minipage}[t]{0.19\linewidth}
    \centering
    \textbf{$\lambda=1k$}\\[0.3em]
    \includegraphics[width=\linewidth]{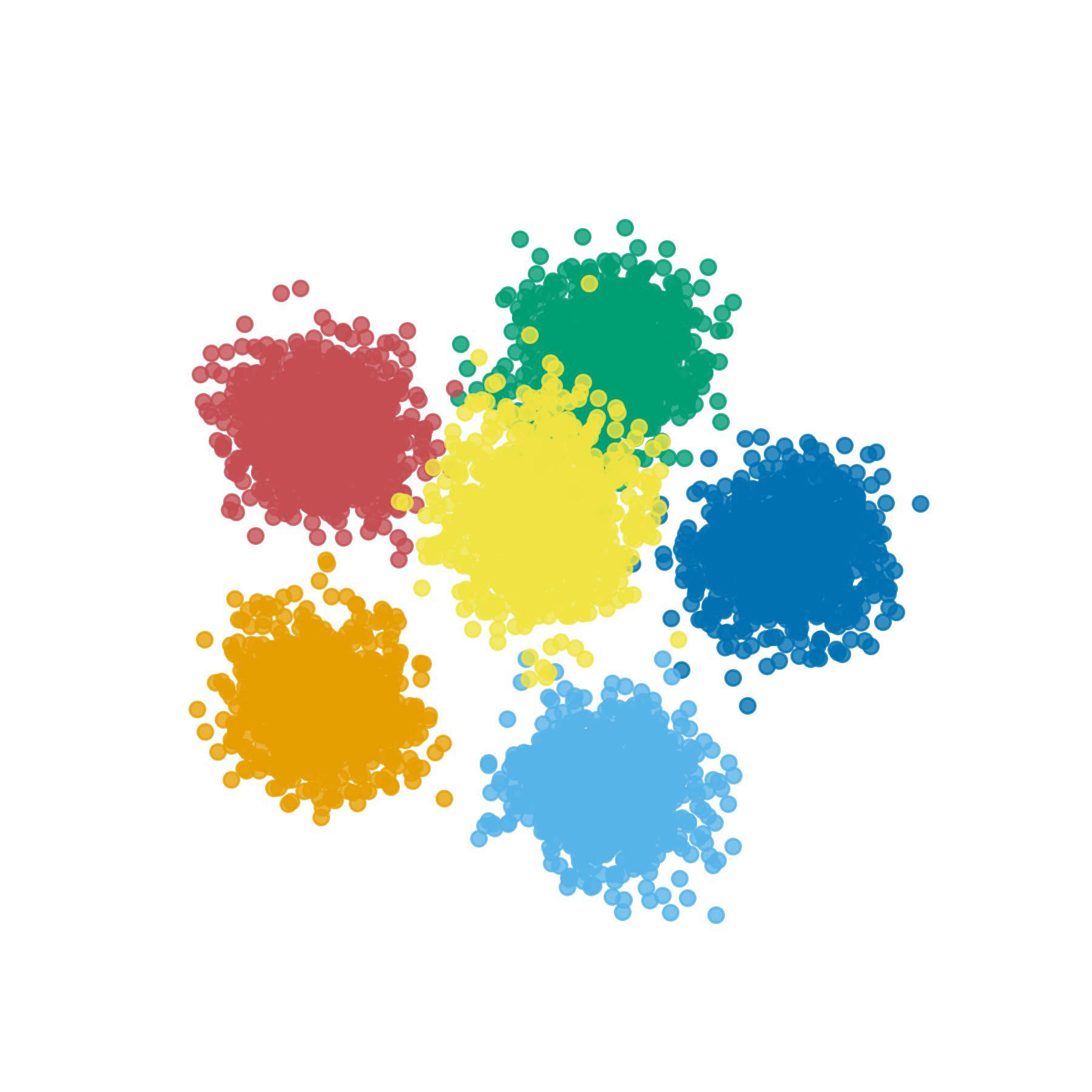}
  \end{minipage}%
  \hfill
  \begin{minipage}[t]{0.19\linewidth}
    \centering
    \textbf{$\lambda=10k$}\\[0.3em]
    \includegraphics[width=\linewidth]{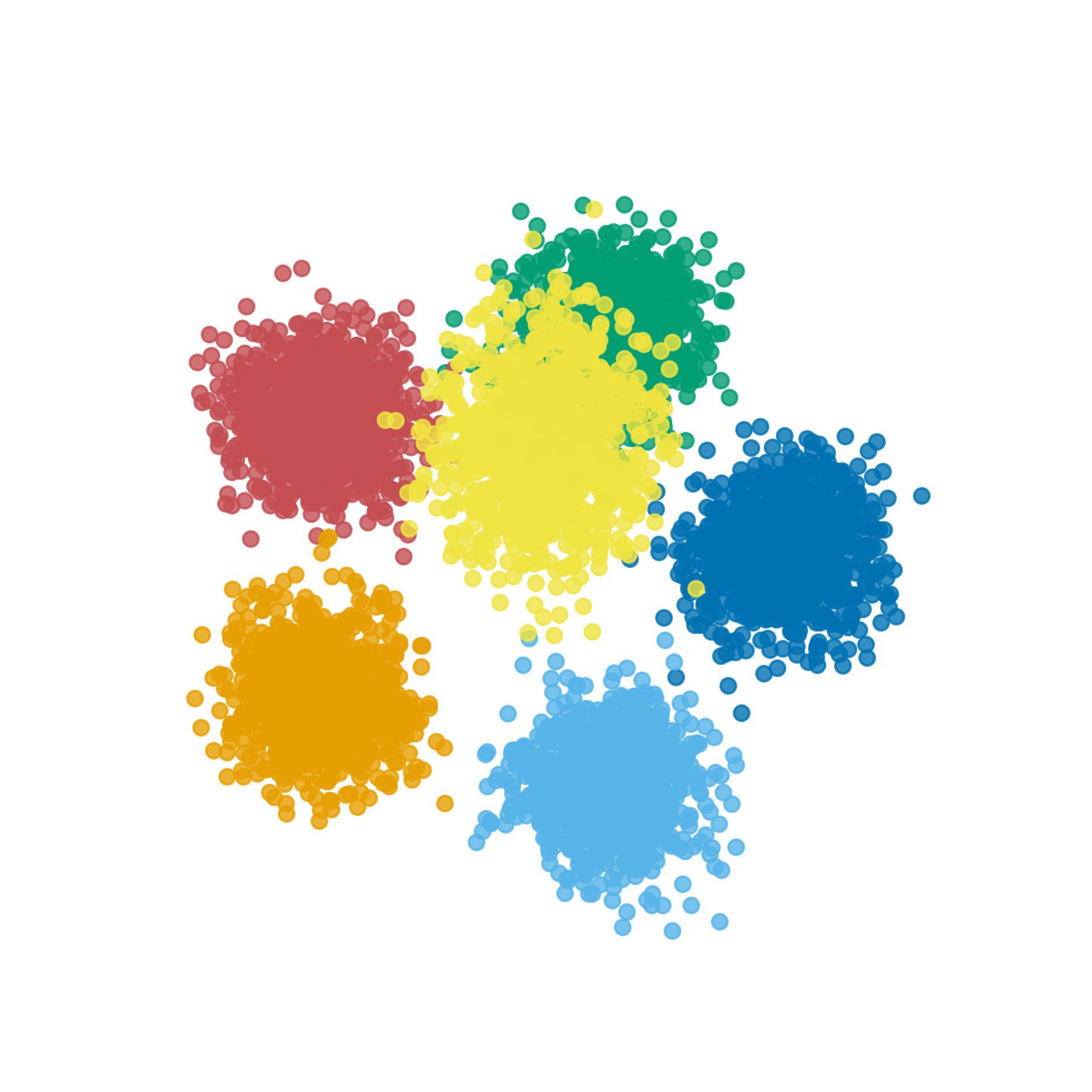}
  \end{minipage}

  \caption{Visualization results under different regularization strengths $\lambda$.}
  \label{fig:app:lambda-ablation}
\end{figure}

\begin{figure}[h]
    \centering
    \includegraphics[width=0.65\linewidth]{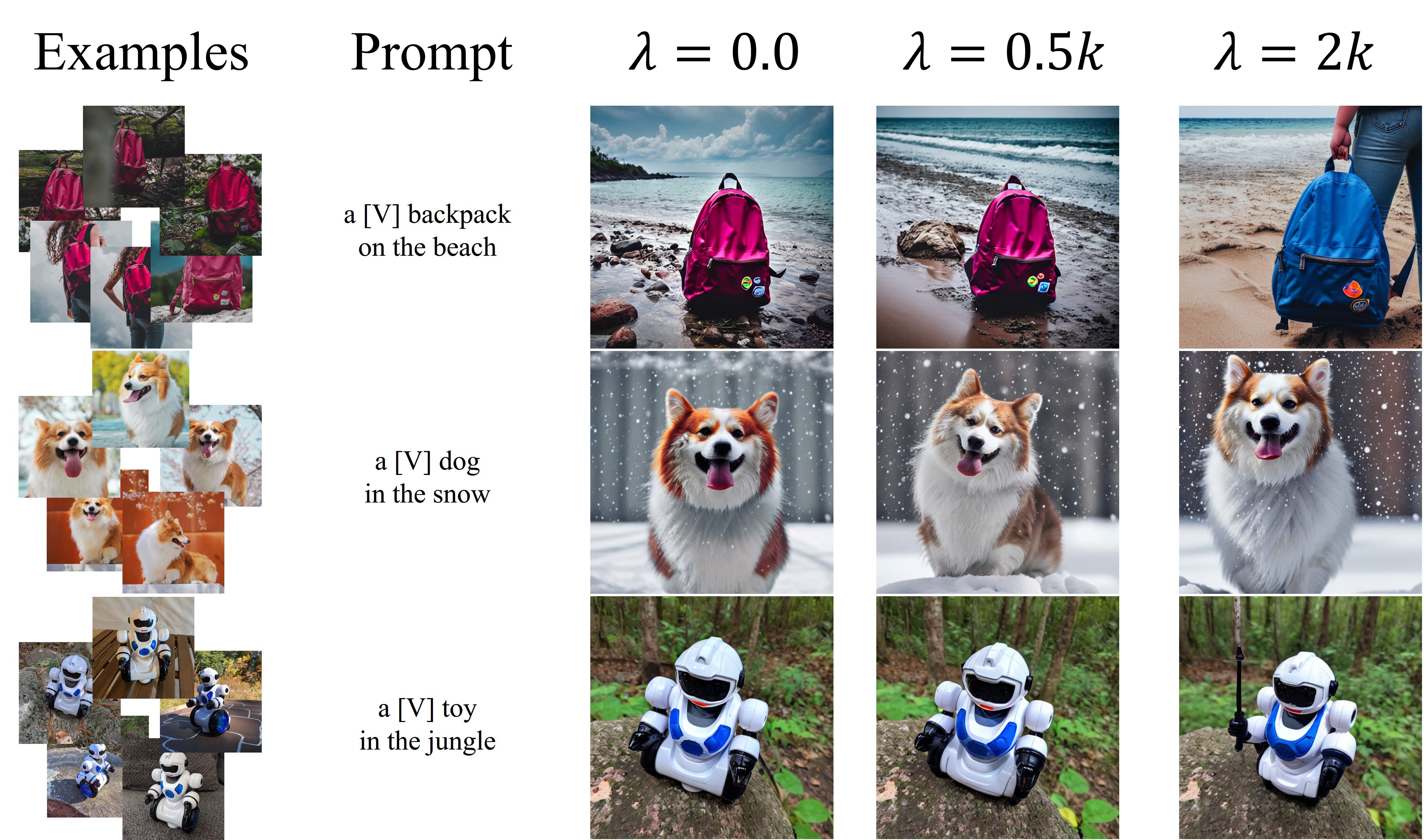}
    \caption{Additional visual comparison by regularization strength.}
    \label{fig:app_abl_output}
\end{figure}

\clearpage
\section{User Preference Study}
\label{sec:user_study}
We conducted a user preference study to further support the proposed method.

\subsection{Setup}
A total of 35 participants were asked to select the most preferred image among the model outputs generated from the given test prompts: 35 for general preference and 15 for the detailed criteria evaluation. We designed 30 questions covering all classes in the dataset~\cite{ruiz2023dreambooth}, and randomized the order of the models for each class when presenting them to the participants. We compared four models: our method, DreamBooth (DB)~\cite{ruiz2023dreambooth}, DreamBooth LoRA (DB LoRA)~\cite{hu2022lora}, and the conditioner-based IP-Adaptor~\cite{chen2023subject}, chosen based on their competitive quantitative performance.

The evaluation criteria were explained to the participants in advance, and they were instructed to consider the following aspects comprehensively:
\begin{itemize}
    \item Identity Preservation: whether the identity of the user-specified object is well preserved.
    \item Prompt Coherence: whether the condition described in the prompt is accurately reflected.
    \item Image Quality: whether the image is free of artifacts or distortions.
    \item Overall Preference: jointly considers the three criteria above.
\end{itemize}

\subsection{Experimental Results}
The results of the user preference study are shown in Table~\ref{tab:user_study}. Our method achieved the highest overall preference at 45.81\%, while IP-Adaptor recorded the lowest preference at 11.53\%. This is in contrast to their positions on the DINO score, where both models ranked among the highest in the quantitative experiment of our submission. We attribute this discrepancy to the IP-Adaptor's inability to accurately represent the target identity, as shown in Figure~\ref{fig:visual_compar} of our manuscript. DB and DB-LoRA received preference scores of 25.14\% and 17.52\%, respectively.

The detailed criteria enable a more fine-grained interpretation of the overall preference and reveal clear tendencies of each method. DB shows strong preference in identity preservation but lower prompt adherence. In contrast, DB-LoRA shows lower identity preservation but twice higher prompt adherence. As discussed in Section~\ref{sec:experiments}, these tendencies are consistent with the known behaviors of the two methods: DB tends to overfit, whereas DB-LoRA tends to underfit. IP-Adaptor produces good image quality but remains weak in personalization. Our method maintains a balanced performance across all criteria. This suggests that the proposed objective captures subject-specific information while preserving the pretrained distribution, leading to consistent and stable results. 


\begin{table}[h]
\centering
\caption{User study results across four criteria (all values in \%).}
\label{tab:user_study}
\begin{tabular}{lcccc}
\toprule
\textbf{Method} & \textbf{Identity Preservation} & \textbf{Prompt Coherence} & \textbf{Image Quality} & \textbf{Overall Preference} \\
\midrule
Ours        & 44.22 & 43.78 & 44.44 & 45.81 \\
DB          & 33.33 & 23.11 & 24.89 & 25.14 \\
DB-LoRA     & 11.78 & 22.67 & 13.11 & 17.52 \\
IP-Adaptor  & 10.67 & 10.44 & 17.56 & 11.53 \\
\bottomrule
\end{tabular}
\end{table}


\clearpage
\section{Additional Comparison with DreamBooth}
We conduct an additional experiment using DreamBooth. It is the most comparable method since it fine-tunes all parameters as our method does. To compare the effect of the objective, we unify all hyperparameters except the loss. We set the learning rate of DreamBooth to $2 \times 10^{-6}$, the same as ours, instead of its default $5 \times 10^{-6}$. The results are shown in Table~\ref{tab:quant_compar_dreambooth}. Lowering the learning rate in DreamBooth leads to an imbalanced outcome. It improves the CLIP-T score, but lowers both DINO and CLIP-I scores. This suggests reduced image fidelity. As shown in Figure~\ref{fig:add_db}, the visual results fail to preserve subject identity in Rows 1 and 3, although this is not always the case, as seen in Row 2. These findings highlight the difference in objectives. Our method achieves better trade-offs and is more suitable for personalization. 

\begin{table}[h]
  \centering
  \footnotesize  
  \caption{Additional quantitative comparison of personalization methods. All scores are identical to those reported above, except for the last row.}
  \label{tab:quant_compar_dreambooth}
  \begin{tabular}{lccc}
    \toprule
    \textbf{Method}  & \textbf{DINO ${\uparrow}$} & \textbf{CLIP-T ${\uparrow}$} & \textbf{CLIP-I ${\uparrow}$} \\
    \midrule
    \midrule
    Ours                & 0.6394 & 0.2976  & 0.7948 \\
    DreamBooth          & 0.6028 & 0.2793 & 0.7881   \\
    DreamBooth (LR $2 \times 10^{-6}$)  & 0.5707 & 0.3104  & 0.7695 \\
    \bottomrule
  \end{tabular}
\end{table}

\begin{figure}[h]
    \centering
    \includegraphics[width=0.65\linewidth]{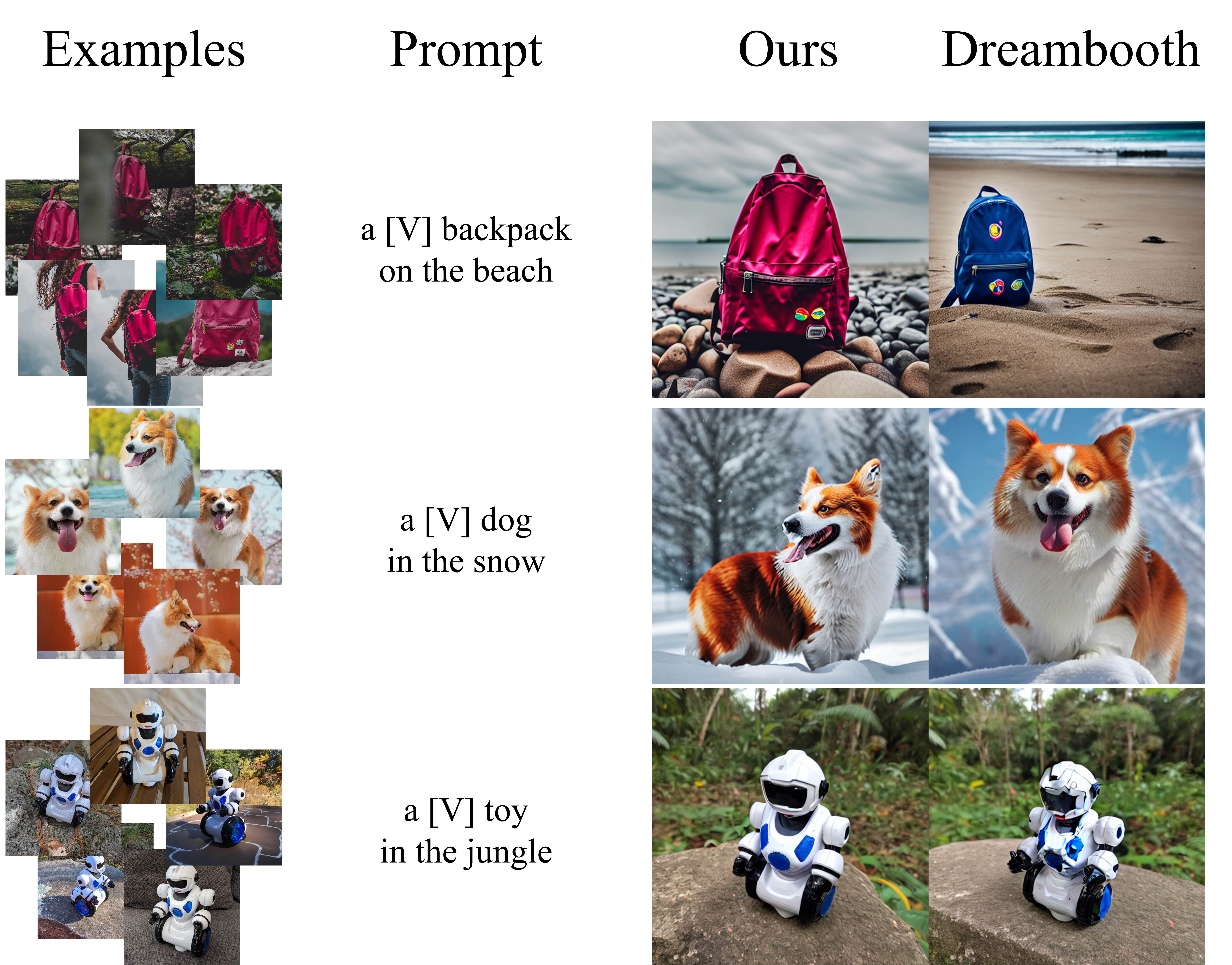}
    \caption{Additional visual comparison with other baselines.}
    \label{fig:add_db}
\end{figure}

\clearpage
\section{Additional Visual Comparison and Failure Cases}
\label{sec:app_results}
We present additional qualitative results and failure cases. As shown in Figure~\ref{fig:add_compar1}, our method captures both subject-specific features and prompt semantics. In contrast, DreamBooth often produces artifacts or fails to reflect the prompt, while DB-LoRA tends to preserve the prompt but underrepresents subject identity. Other baseline methods generally struggle to produce coherent results. We also highlight representative failure cases. Figure~\ref{fig:fail_compar1} shows a case that struggles with cartoon-style minor concepts. The model learns the subject’s color but fails to reconstruct its structural identity similar to other methods. In Figure~\ref{fig:fail_compar2}, the model fails to properly apply the prompt to the subject. This further shows that prompts requiring changes in the subject’s appearance are difficult to handle. These cases indicate that our method also faces challenges in generating certain subject–prompt combinations. As discussed in Section~\ref{discussion}, this underscores the importance of incorporating the distributional structure of both the pretrained model and the target subject. 

We also present additional results on SD-XL and SD-3.0 in Figure~\ref{fig:add_compar_backbone}. Both backbone models are advanced architectures, and the overall image quality is generally high. For SD-XL, Custom Diffusion fails to preserve subject identity, which is reflected in the low image fidelity scores. While CD+Ours shows improvements, certain cases (e.g., the first row) remain insufficient. Similarly, LoRA exhibits identity shifts in the first row, whereas LoRA+Ours successfully preserves the subject identity, consistent with the improved image fidelity scores. Although CLIP-T remains stable even when it drops in Table~\ref{tab:compar_backbone}, some failure cases can still be observed. For example, in Figure~\ref{fig:add_compar_backbone}, the second row of SD-3.0 with CD+Ours does not reflect the prompt, and the third row shows only partial adherence. A similar pattern appears in SD-XL LoRA+Ours, where the prompt is partially captured in the third row. These examples illustrate situations where the text alignment weakens during personalization, which aligns with the observed drop in CLIP-T. We note that although our method improves most metrics and visual results, careful handling is still required across different models and subjects.

%


\begin{figure}[h]
    \centering
    \includegraphics[width=0.75\linewidth]{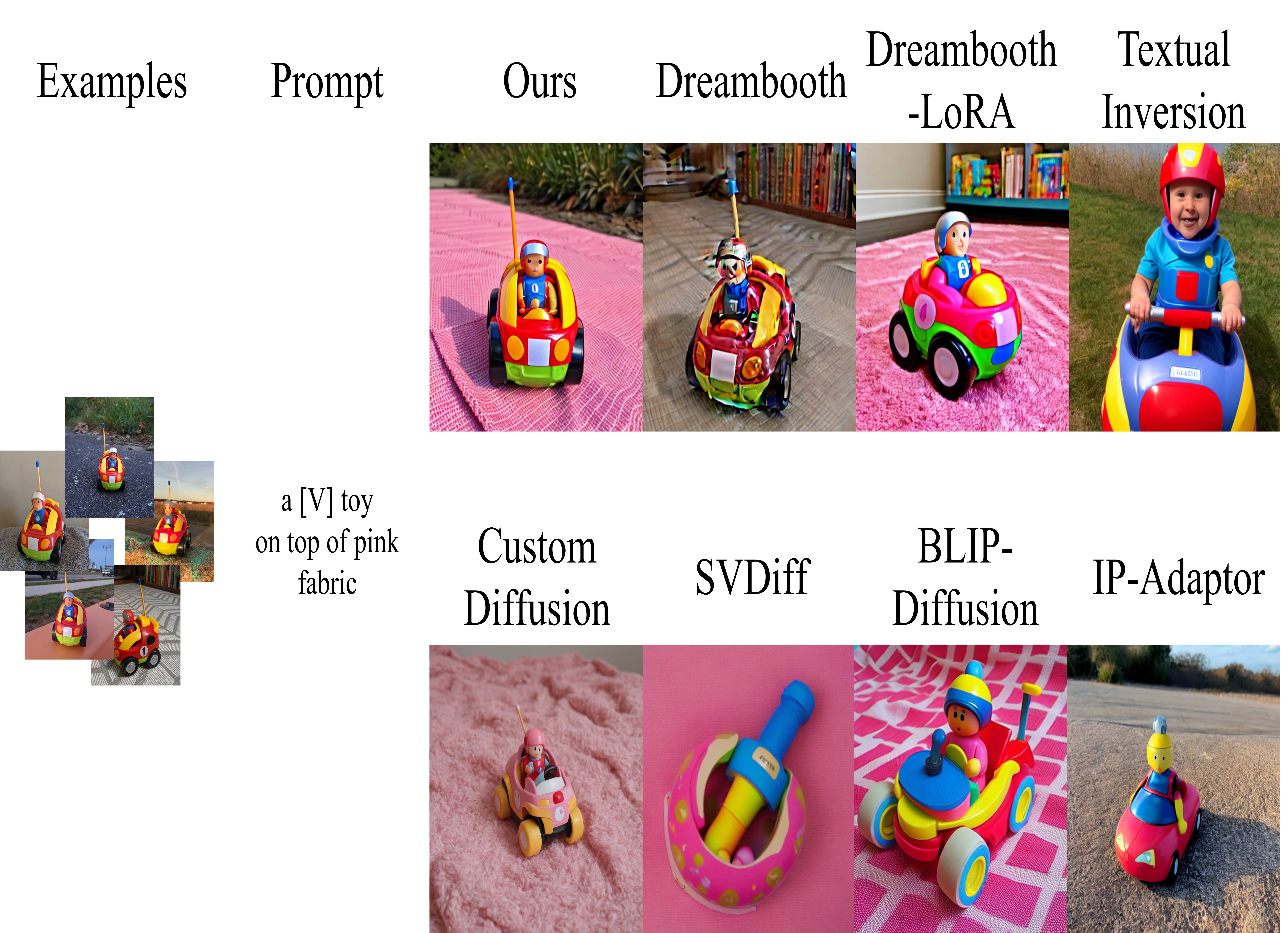}
    \caption{Additional visual Comparison with other baselines.}
    \label{fig:add_compar1}
\end{figure}


\begin{figure}[t]

    \centering
    \includegraphics[width=0.75\linewidth]{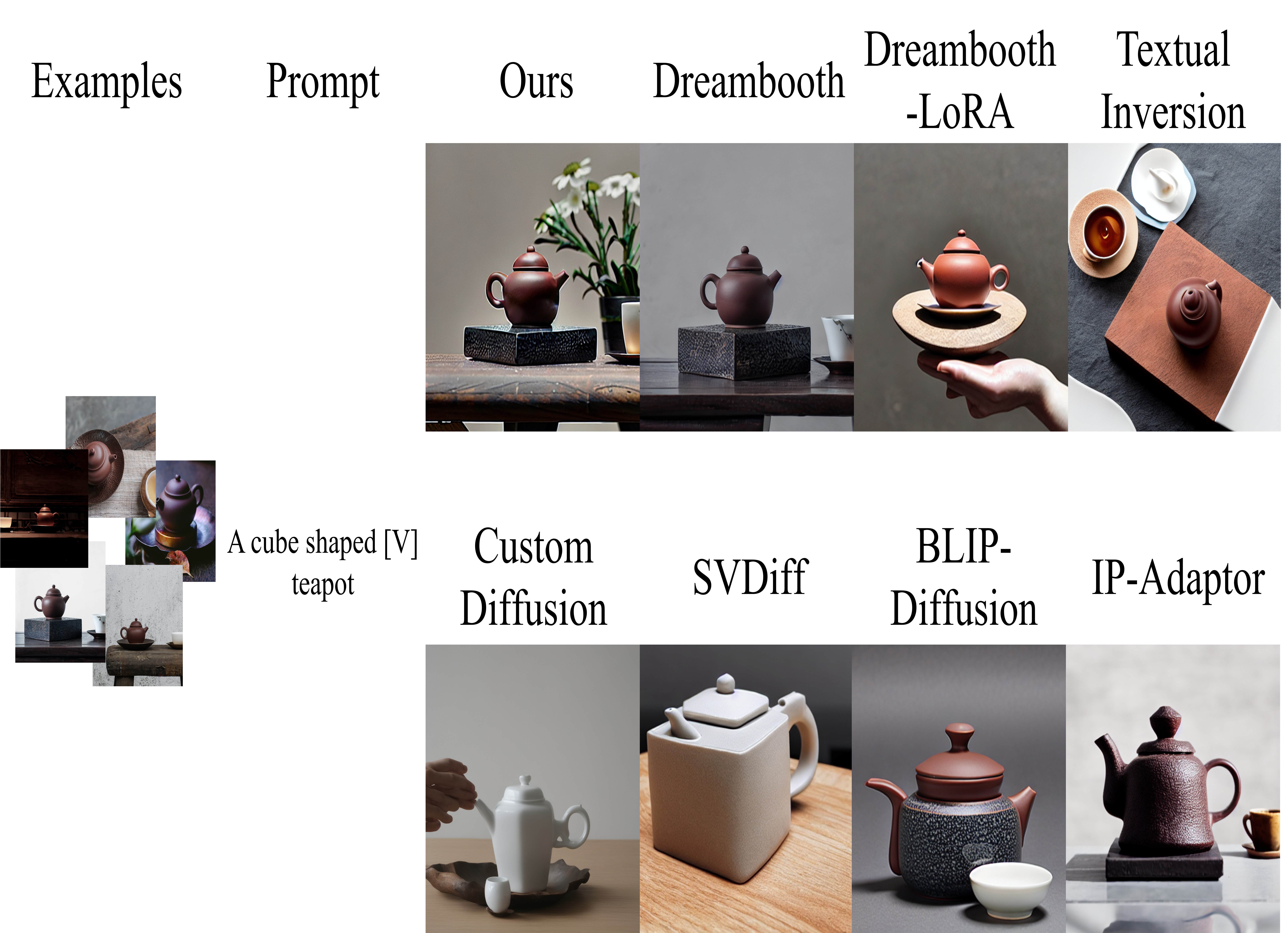}
    \caption{Visual comparison of failure cases across baselines.}
    \label{fig:fail_compar1}

\end{figure}

\begin{figure}[h]
    \centering
    \includegraphics[width=0.75\linewidth]{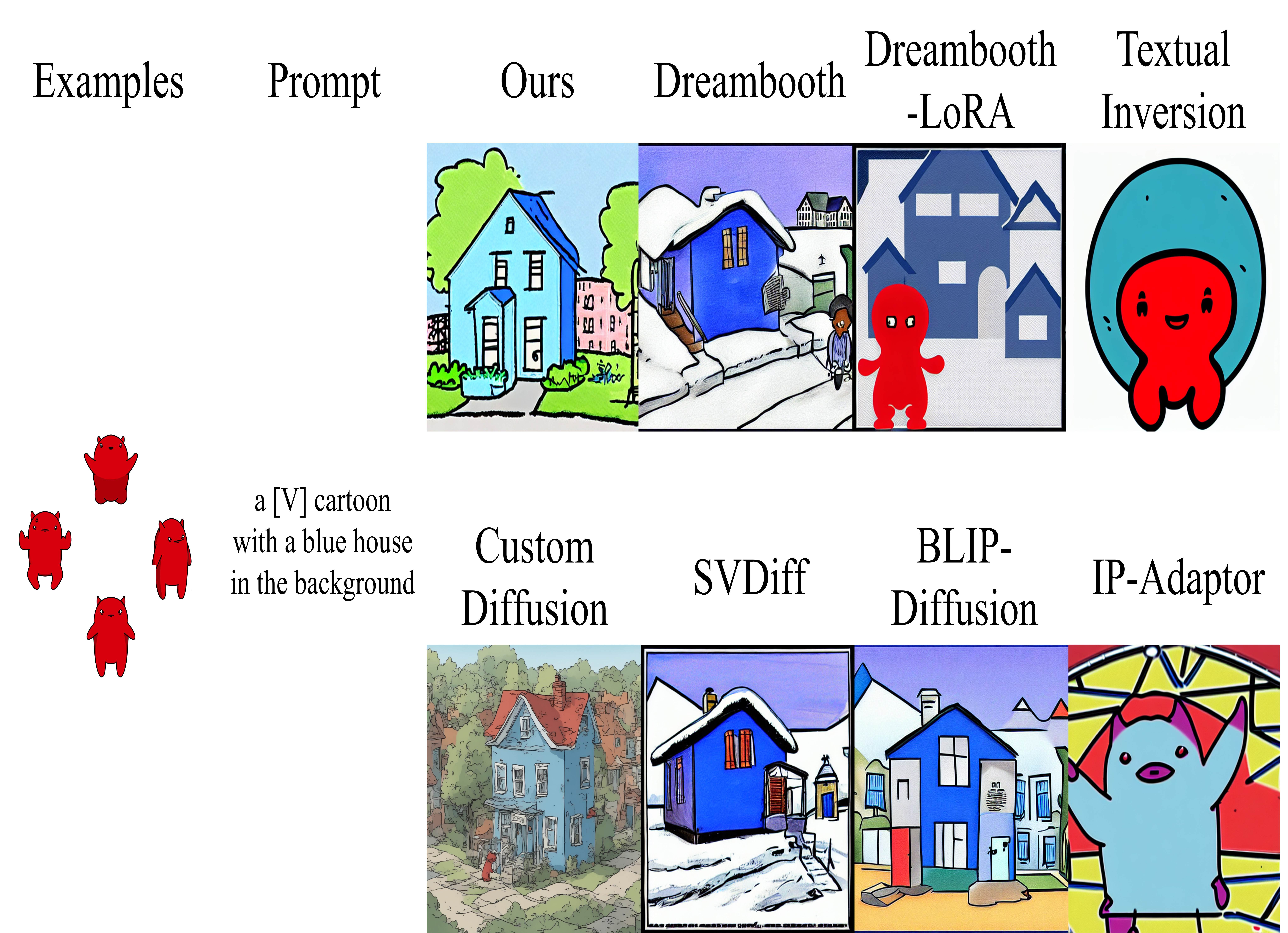}
    \caption{Visual comparison of failure cases across baselines 2.}
    \label{fig:fail_compar2}
\end{figure}

\begin{figure}[h]
    \centering
    \includegraphics[width=\linewidth]{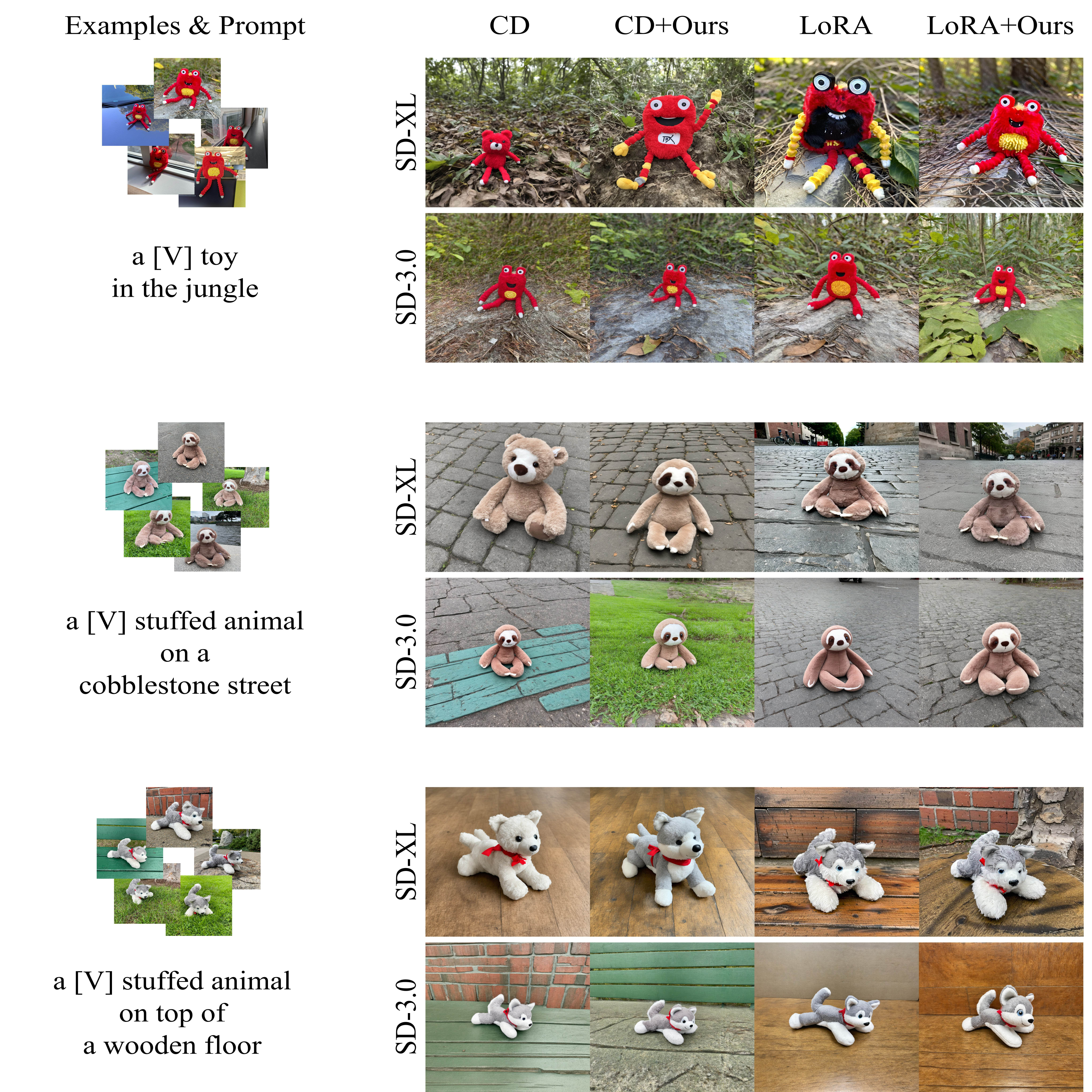}
    \caption{Visual comparison of failure cases across different backbones.}
    \label{fig:add_compar_backbone}
\end{figure}

\end{document}